\def\Figref#1{Figure~\ref{#1}}
\def\Tabref#1{Table~\ref{#1}}
\def\Secref#1{Section~\ref{#1}}
\def\eqref#1{(\ref{#1})}
\def\1{\bm{1}}
\DeclareMathAlphabet{\mathsfit}{\encodingdefault}{\sfdefault}{m}{sl}
\SetMathAlphabet{\mathsfit}{bold}{\encodingdefault}{\sfdefault}{bx}{n}
\newcommand{\R}{\mathbb{R}}
\newtheorem{prop}{Proposition}
\newtheorem{definition}{Definition}
\def\eg{e.g.~}
\def\ie{i.e.~}
\def\D{{\mathcal{D}}}
\def\I{{\mathcal{I}}}
\def\F{{\mathcal{F}}}
\def\A{{\mathcal{A}}}
\newcommand*\diff{\mathop{}\!\mathrm{d}}
\begin{document}

\title[APHYNITY]{Augmenting Physical Models with Deep Networks for Complex Dynamics Forecasting}

\author{$^*$Yuan Yin$^{1}$, $^*$Vincent~Le~Guen$^{2,3}$, $^*$Jérémie~Dona$^{1}$, $^*$Emmanuel~de~Bézenac$^1$, $^*$Ibrahim~Ayed$^{1,4}$, Nicolas~Thome$^{2}$ \& Patrick~Gallinari$^{1,5}$}

\address{$^{1}$ Sorbonne Université, Paris, France\\
$^{2}$ Conservatoire National des Arts et Métiers, CEDRIC, Paris, France \\
$^{3}$ EDF R\&D, Chatou, France \\
$^{4}$ Theresis Lab, Thales \\
$^{5}$ Criteo AI Lab, Paris, France}
\ead{yuan.yin@sorbonne-universite.fr, vincent.le-guen@edf.fr, jeremie.dona@sorbonne-universite.fr, emmanuel.de\_bezenac@sorbonne-universite.fr, ibrahim.ayed@sorbonne-universite.fr, nicolas.thome@cnam.fr, patrick.gallinari@sorbonne-universite.fr}

\vspace{10pt}
\begin{indented}
\item[]$^*${Equal contribution, authors sorted by reverse alphabetical order.}
\vspace{10pt}
\item[]November 2021
\end{indented}

\begin{abstract}
Forecasting complex dynamical phenomena in settings where only partial knowledge of their dynamics is available is a prevalent problem across various scientific fields. While purely data-driven approaches are arguably insufficient in this context, standard physical modeling based approaches tend to be over-simplistic, inducing non-negligible errors. In this work, we introduce the APHYNITY framework, a principled approach for augmenting \textit{incomplete} physical dynamics described by differential equations with deep data-driven models. It consists in decomposing the dynamics into two components: a physical component accounting for the dynamics for which we have some prior knowledge, and a data-driven component accounting for errors of the physical model. The learning problem is carefully formulated such that the physical model explains as much of the data as possible, while the data-driven component only describes information that cannot be captured by the physical model, no more, no less. This not only provides the existence and uniqueness for this decomposition, but also ensures interpretability and benefits generalization. Experiments made on three important use cases, each representative of a different family of phenomena, \ie reaction-diffusion equations, wave equations and the non-linear damped pendulum, show that APHYNITY can efficiently leverage approximate physical models to accurately forecast the evolution of the system and correctly identify relevant physical parameters. Code is available at \url{https://github.com/yuan-yin/APHYNITY}.
\end{abstract}

\section{Introduction\label{sec:intro}}

Modeling and forecasting complex dynamical systems is a major 
challenge in domains such as environment and climate~\cite{rolnick2019tackling}, health science~\cite{choi2016retain}
, and in many industrial applications~\cite{toubeau2018deep}. 
Model Based (MB) approaches typically rely on partial or ordinary differential equations (PDE/ODE) and stem from a deep understanding of the underlying physical phenomena. Machine learning (ML) and deep learning methods are more prior agnostic yet have become state-of-the-art for several spatio-temporal prediction tasks~\cite{shi2015convolutional,wang2018predrnnta,oreshkin2019n,dona2020pde}, and connections have been drawn between deep architectures and numerical ODE solvers, \eg neural ODEs~\cite{chen2018neural,ayed2019learning}. However, modeling complex physical dynamics is still beyond the scope of pure ML methods, which often cannot properly extrapolate to new conditions as MB approaches do.

Combining the MB and ML paradigms is an emerging trend to develop the interplay between the two paradigms. For example, \citeasnoun{brunton2016discovering} and \citeasnoun{long2018pde} learn the explicit form of PDEs directly from data, \citeasnoun{raissi2017physics} and \citeasnoun{sirignano2018dgm} use NNs as implicit methods for solving PDEs, \citeasnoun{seo2020} learn spatial differences with a graph network, \citeasnoun{ummenhofer2020} introduce continuous convolutions for fluid simulations, \citeasnoun{de2017deep} learn the velocity field of an advection-diffusion system,
\citeasnoun{greydanus2019hamiltonian} and \citeasnoun{chen2019symplectic} enforce conservation laws in the network architecture or in the loss function. 
The large majority of aforementioned MB/ML hybrid approaches assume that the physical model adequately describes the observed dynamics. This assumption is, however, commonly violated in practice. This may be due to various factors, \eg idealized assumptions and difficulty to explain processes from first principles \cite{gentine}, computational constraints prescribing a fine grain modeling of the system \cite{epnet}, unknown external factors, forces and sources which are present \cite{Large2004}. In this paper, we aim at leveraging prior dynamical ODE/PDE knowledge in situations where this physical model is incomplete, \ie unable to represent the whole complexity of observed data. To handle this case, we introduce a principled learning framework to Augment incomplete PHYsical models for ideNtIfying and forecasTing complex dYnamics~(APHYNITY). The rationale of APHYNITY, illustrated in \Figref{fig:comparison_data_phys_coop} on the pendulum problem, is to \textit{augment} the physical model when---and only when---it falls short.

Designing a general method for combining MB and ML approaches is still a widely open problem, and a clear problem formulation for the latter is lacking \cite{Reichstein2019}. Our contributions towards these goals are the following:
\begin{itemize}
\item We introduce a simple yet principled framework for combining both approaches. We decompose the data into a physical and a data-driven term such that the data-driven component only models information that cannot be captured by the physical model. We provide existence and uniqueness guarantees~(\Secref{subsec:decomp}) for the decomposition given mild conditions, and show that this formulation ensures interpretability and benefits generalization.

\item We propose a trajectory-based training formulation~(\Secref{subsec:learning}) along with an adaptive optimization scheme~(\Secref{subsec:optim}) enabling end-to-end learning for both physical and deep learning components. This allows APHYNITY to \textit{automatically} adjust the complexity of the neural network to different approximation levels of the physical model, paving the way to flexible learned hybrid models.
    
\item We demonstrate the generality of the approach on three use cases (reaction-diffusion, wave equations and the pendulum) representative of different PDE families (parabolic, hyperbolic), having a wide spectrum of application domains, \eg acoustics, electromagnetism, chemistry, biology, physics~(\Secref{sec:expes}). We show that APHYNITY is able to achieve performances close to complete physical models by augmenting incomplete ones, both in terms of forecasting accuracy and physical parameter identification. Moreover, APHYNITY can also be successfully extended to the partially observable setting~(see discussion in \Secref{discussion}).

\end{itemize}

\begin{figure}
\centering
\resizebox{\textwidth}{!}{
\begin{tabular}{ccc} 
 \hspace{-0.5cm}
 \includegraphics[height=3.7cm]{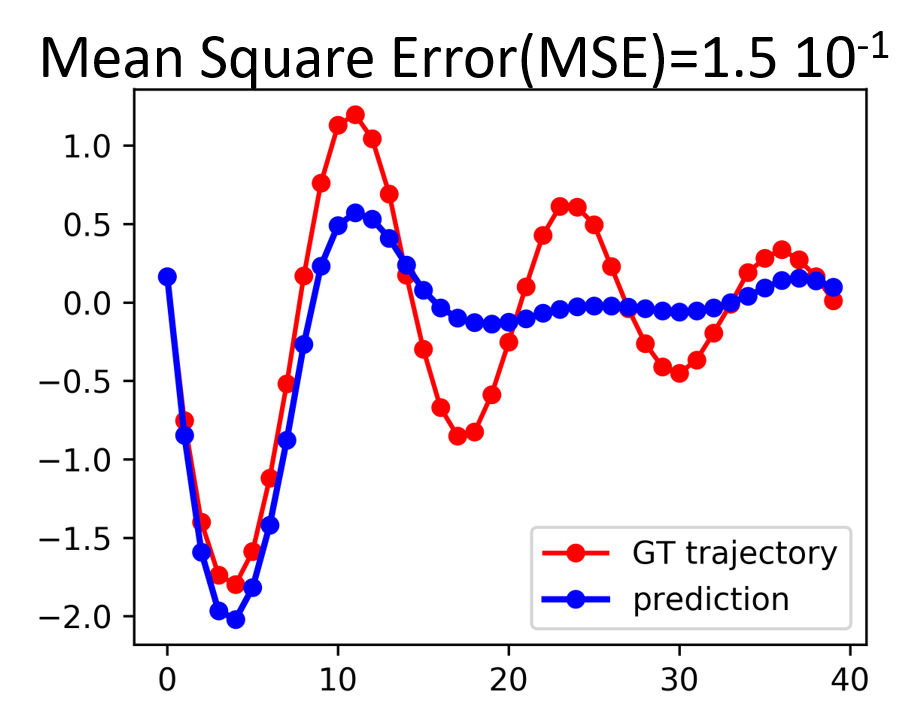} &  \hspace{-0.5cm} \includegraphics[height=3.7cm]{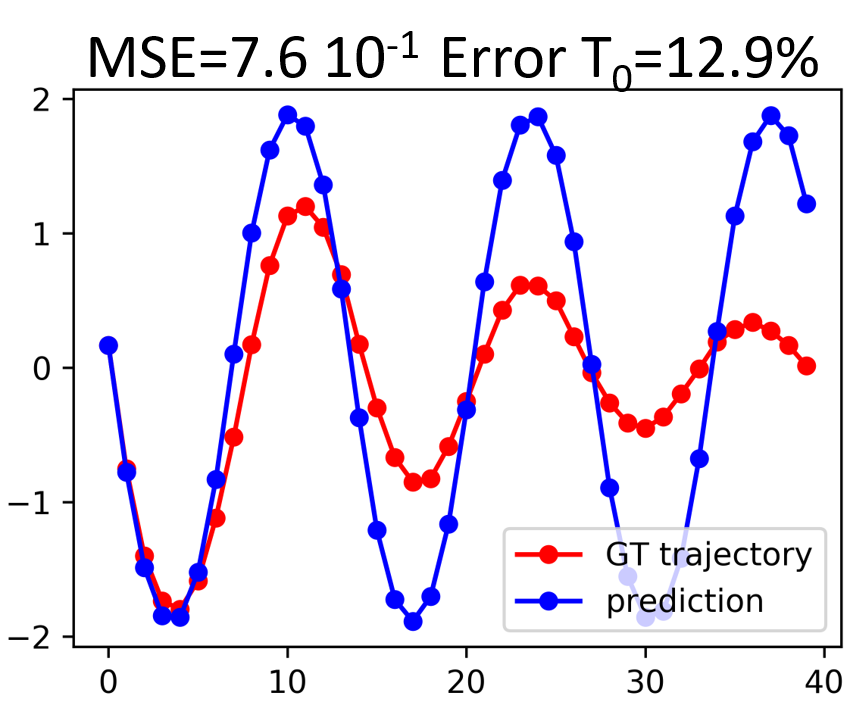} &
 \hspace{-0.5cm}
 \includegraphics[height=3.7cm]{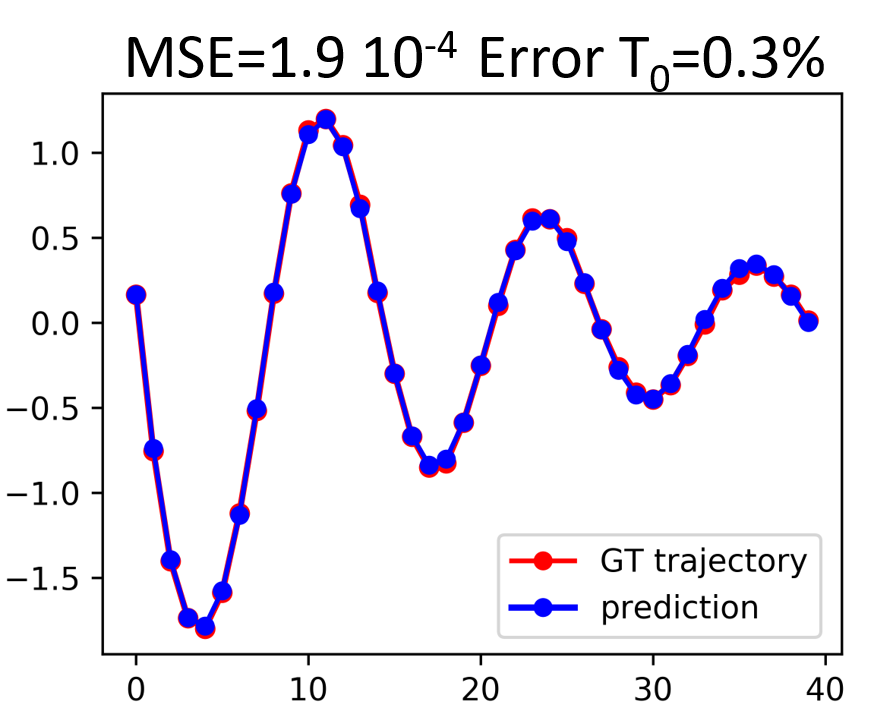}\\
(a) Data-driven Neural ODE  & (b) Simple physical model & (c) Our APHYNITY framework
\end{tabular}
}
\caption{Predicted dynamics for the damped pendulum vs. ground truth (GT) trajectories $\nicefrac{\diff^2 \theta}{\diff t^2} + \omega_0^2 \sin \theta + \alpha \nicefrac{\diff \theta}{\diff t} = 0$. We show that in (a) the data-driven approach \protect\cite{chen2018neural} fails to properly learn the dynamics due to the lack of training data, while in (b) an ideal pendulum cannot take friction into account. The proposed APHYNITY shown in (c) augments the over-simplified physical model in (b) with a data-driven component. APHYNITY improves both forecasting (MSE) and parameter identification (Error $T_0$) compared to (b).  } \label{fig:comparison_data_phys_coop}
\end{figure}

\section{Related work}
\label{sec:related-work}

\paragraph{Correction in data assimilation} Prediction under approximate physical models has been tackled by traditional statistical calibration techniques, which often rely on Bayesian methods~\cite{pernot2017critical}. Data assimilation techniques, \eg the Kalman filter~\cite{kalman1960new,becker2019recurrent}, 4D-var \cite{courtier1994strategy},  prediction errors are modeled probabilistically and a correction using observed data is applied after each prediction step. Similar residual correction procedures are commonly used in robotics and optimal control \cite{chen2004disturbance,li2014disturbance}. However, these sequential (two-stage) procedures prevent the cooperation between prediction and correction.  Besides, in model-based reinforcement learning, model deficiencies are typically handled by considering only short-term rollouts \cite{janner2019trust} or by model predictive control \cite{nagabandi2018neural}.
The originality of APHYNITY is to leverage model-based prior knowledge by augmenting it with neurally parametrized dynamics. It does so while ensuring optimal cooperation between the prior model and the augmentation.

\paragraph{Augmented physical models} Combining physical models with machine learning (\textit{gray-box or \textit{hybrid}} modeling) was first explored from the 1990's: \cite{psichogios1992hybrid,thompson1994modeling,rico1994continuous} use neural networks to predict the unknown parameters of physical models. The challenge of proper MB/ML cooperation was already raised as a limitation of gray-box approaches but not addressed. Moreover these methods were evaluated on specific applications with a residual targeted to the form of the equation.
In the last few years, there has been a renewed interest in deep hybrid models bridging data assimilation techniques and machine learning to identify complex PDE parameters using cautiously constrained forward model \cite{long2018pde,de2017deep}, as discussed in introduction. Recently, some approaches have specifically targetted the MB/ML cooperation. HybridNet~\cite{long2018hybridnet} and PhICNet~\cite{saha2020phicnet} both use data-driven networks to learn additive perturbations or source terms to a given PDE. The former considers the favorable context where the perturbations can be accessed, and the latter the special case of additive noise on the input. \citeasnoun{wang2019integrating} and \citeasnoun{neural20} propose several empirical fusion strategies with deep neural networks but lack theoretical groundings. PhyDNet \cite{leguen20phydnet} tackles augmentation in  partially-observed settings, but with specific recurrent architectures dedicated to video prediction. Crucially, all the aforementioned approaches do not address the issues of uniqueness of the decomposition or of proper cooperation for correct parameter identification. Besides, we found experimentally that this vanilla cooperation is inferior to the APHYNITY learning scheme in terms of forecasting and parameter identification performances (see experiments in \Secref{sec:results}).

\section{The APHYNITY Model}
\label{sec:model}

In the following, we study dynamics driven by an equation of the form:
\begin{equation}
\label{eq:ode}
\frac{\diff X_t}{\diff t} = F(X_t)    
\end{equation}
defined over a finite time interval $[0,T]$, where the state $X$ is either vector-valued, \ie we have $X_t\in\R^d$ for every $t$ (pendulum equations in Section \ref{sec:expes}), or $X_t$ is a $d$-dimensional vector field over a spatial domain $\Omega\subset\R^k$, with $k\in\{2,3\}$, \ie $X_t(x)\in\R^d$ for every $(t,x)\in[0,T]\times\Omega$ (reaction-diffusion and wave equations in Section \ref{sec:expes}). 
We suppose that we have access to a set of observed trajectories $\D = \{X_\cdot:[0,T]\rightarrow\A \ |\ \forall t\in[0,T], \nicefrac{\diff X_t}{\diff t} = F(X_t)\}$, where $\A$ is the set of $X$ values (either $\R^d$ or vector field). In our case, the unknown $F$ has $\A$ as domain and we only assume that $F\in\F$, with $(\F, \|\cdot\|)$ a normed vector space.

\subsection{Decomposing dynamics into physical and augmented terms\label{subsec:decomp}}

As introduced in \Secref{sec:intro}, we consider the common situation where incomplete information is available on the dynamics, under the form of a family of ODEs or PDEs characterized by their temporal evolution $F_p\in\F_p\subset\F$. The APHYNITY framework leverages the knowledge of $\F_p$ while mitigating the approximations induced by this simplified model through the combination of physical and data-driven components. $\F$ being a vector space, we can write:
\[
F = F_p + F_a
\]
where $F_p\in\F_p$ encodes the incomplete physical knowledge and $F_a\in\F$ is the  data-driven augmentation term complementing $F_p$. The incomplete physical prior is supposed to belong to a known family, but the physical parameters~(\eg propagation speed for the wave equation) are unknown and need to be estimated from data. Both $F_p$ and $F_a$ parameters are estimated by fitting the trajectories from $\D$.

The decomposition $F = F_p + F_a$ is in general not unique. For example, all the dynamics could be captured by the $F_a$ component. This decomposition is thus ill-defined, which hampers the interpretability and the extrapolation abilities of the model. In other words, one wants the estimated parameters of $F_p$ to be as close as possible to the true parameter values of the physical model and $F_a$ to play only a complementary role w.r.t $F_p$, so \textit{as to model only the information that cannot be captured by the physical prior}. For example, when $F\in\F_p$, the data can be fully described by the physical model, and in this case it is sensible to desire $F_a$ to be nullified; this is of central importance in a setting where one wishes to identify physical quantities, and for the model to generalize and extrapolate to new conditions. In a more general setting where the physical model is incomplete, the action of $F_a$ on the dynamics, as measured through its norm, should be as small as possible.

This general idea is embedded in the following optimization problem:
\begin{equation}
\label{eq:opt}
\min\limits_{F_p\in\F_p, F_a\in\F} \left\Vert F_a  \right\Vert ~~
\mathrm{subject~to} ~~ \forall X\in\D, \forall t, \frac{\diff X_t}{\diff t} = (F_p+F_a)(X_t)
\end{equation}
The originality of APHYNITY is to leverage model-based prior knowledge by augmenting it with neurally parametrized dynamics. It does so while ensuring optimal cooperation between the prior model and the augmentation.

A first key question is whether the minimum in \eqref{eq:opt} is indeed well-defined, in other words whether there exists indeed a decomposition with a minimal norm $F_a$. The answer actually depends on the geometry of $\F_p$, and is formulated in the following proposition proven in \ref{app:proof}:
\begin{prop}[Existence of a minimizing pair]\label{prop:exist_unique}
If $\F_p$ is a proximinal set\footnote{\label{fn:proximal-chebyshev}A proximinal set is one from which every point of the space has at least one nearest point. A Chebyshev set is one from which every point of the space has a unique nearest point. More details in \ref{app:chebyshev}.}, there exists a decomposition minimizing \eqref{eq:opt}.
\end{prop}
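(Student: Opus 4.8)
The plan is to reduce the constrained problem~\eqref{eq:opt} to a nearest-point (best-approximation) problem in the normed space $(\F,\|\cdot\|)$, and then read off the conclusion directly from the definition of a proximinal set.

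First I would rewrite the constraint. By definition of $\D$, every observed trajectory $X$ satisfies $\frac{\diff X_t}{\diff t} = F(X_t)$ for all $t$, so the feasibility condition $\frac{\diff X_t}{\diff t} = (F_p+F_a)(X_t)$ is equivalent to $(F_p+F_a)(X_t) = F(X_t)$ for all $t$ and all $X\in\D$, i.e.\ to the agreement of $F_p+F_a$ with $F$ on the set of states visited by trajectories of $\D$. Since $\A$ is precisely this set of visited values, this is exactly the identity $F_p+F_a = F$ in $\F$. Using that $\F$ is a vector space, I then substitute $F_a = F - F_p$: the feasible set of~\eqref{eq:opt} is in bijection with $\F_p$ through $F_p \mapsto (F_p,\, F-F_p)$, and the objective $\|F_a\|$ becomes $\|F-F_p\|$. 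Hence~\eqref{eq:opt} is equivalent to
\[
\min_{F_p\in\F_p}\ \|F - F_p\|,
\]
whose optimal value is the distance from $F$ to $\F_p$; it is finite because a proximinal set is in particular nonempty.

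Finally, the definition of a proximinal set states precisely that the point $F\in\F$ admits at least one nearest point $F_p^\star\in\F_p$, so $\|F - F_p^\star\| = \inf_{F_p\in\F_p}\|F - F_p\|$. Taking $F_a^\star := F - F_p^\star\in\F$, the pair $(F_p^\star, F_a^\star)$ is feasible for~\eqref{eq:opt} and attains its infimum, which proves the proposition. The step I expect to require the most care is the first one, namely making rigorous the sense in which the trajectories in $\D$ determine $F$ (hence constrain $F_p+F_a$) over all of $\A$: if $\D$ did not cover all of $\A$, the constraint would only bind on the visited region and one would first pass to the normed space of restrictions to that region before invoking proximinality. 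Once this reduction is in place, the existence claim is an immediate restatement of the hypothesis and needs no compactness, convexity, or completeness assumption on $\F_p$.
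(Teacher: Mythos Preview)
Your proposal is correct and follows essentially the same approach as the paper: reduce the constraint to $F_p+F_a=F$ on the visited state set $\A$, rewrite the objective as $\|F-F_p\|$, and invoke the definition of proximinality to obtain a nearest point $F_p^\star$ and set $F_a^\star=F-F_p^\star$. The paper's proof differs only cosmetically in that it explicitly reconstructs a function $F^{\D}$ from the trajectory derivatives rather than appealing directly to the given $F$, but this is the same object on $\A$; your closing remark about restricting to the visited region is exactly the care the paper takes there.
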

Proximinality is a mild condition which, as shown through the proof of the proposition, cannot be weakened. It is a property verified by any boundedly compact set. In particular, it is true for closed subsets of finite dimensional spaces. However, if only existence is guaranteed, while forecasts would be expected to be accurate, non-uniqueness of the decomposition would hamper the interpretability of $F_p$ and this would mean that the identified physical parameters are not uniquely determined. 

It is then natural to ask under which conditions solving problem \eqref{eq:opt} leads to a unique decomposition into a physical and a data-driven component. The following result provides guarantees on the existence and uniqueness of the decomposition under mild conditions. The proof is given in \ref{app:proof}:
\begin{prop}[Uniqueness of the minimizing pair]\label{prop:unique}
If $\F_p$ is a Chebyshev set\textcolor{red}{\footnotemark[1]}, \eqref{eq:opt} admits a unique minimizer. The $F_p$ in this minimizer pair is the metric projection of the unknown $F$ onto $\F_p$.
\end{prop}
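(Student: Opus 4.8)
The plan is to show that, once the dynamical constraint in \eqref{eq:opt} is rewritten as a functional identity, the optimization problem becomes exactly the classical best‑approximation (metric projection) problem for the true dynamics $F$ with respect to $\F_p$, and then to invoke the defining property of a Chebyshev set.

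First I would unpack the constraint. Every trajectory $X\in\D$ satisfies $\frac{\diff X_t}{\diff t}=F(X_t)$ by definition of $\D$, so the constraint $\frac{\diff X_t}{\diff t}=(F_p+F_a)(X_t)$ asserts that $F$ and $F_p+F_a$ coincide on every state visited by some trajectory of $\D$. Since (by standard existence results for the ODE/PDE under consideration) there is a trajectory of $\D$ through every point of $\A$, these visited states exhaust $\A$, and hence the constraint is equivalent to the identity $F=F_p+F_a$ in $\F$. In particular the feasible set is nonempty: for any $F_p\in\F_p$ the pair $(F_p,\,F-F_p)$ is admissible, so \eqref{eq:opt} is well posed.

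Next I would reduce to a projection problem. Under the identity $F=F_p+F_a$, the augmentation term is fully determined by the physical term via $F_a=F-F_p$; therefore \eqref{eq:opt} is equivalent to
\[
\min_{F_p\in\F_p}\ \left\Vert F-F_p\right\Vert ,
\]
i.e. to finding a point of $\F_p$ nearest to $F$. Because $\F_p$ is a Chebyshev set, $F$ has exactly one nearest point $F_p^\star=\operatorname{proj}_{\F_p}(F)\in\F_p$. Setting $F_a^\star=F-F_p^\star$, the pair $(F_p^\star,F_a^\star)$ is then the unique minimizer of \eqref{eq:opt}: any minimizer $(F_p,F_a)$ must have $F_a=F-F_p$ with $F_p$ a nearest point of $\F_p$ to $F$, hence $F_p=F_p^\star$ and $F_a=F_a^\star$. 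Its physical component is by construction the metric projection of $F$ onto $\F_p$, which is the claim. (Running the same argument with ``Chebyshev'' weakened to ``proximinal'' produces at least one nearest point and thus re‑proves Proposition~\ref{prop:exist_unique}.)

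The step I expect to require the most care is the first one: justifying rigorously that the pointwise‑in‑$(X,t)$ constraint is genuinely equivalent to the functional equality $F=F_p+F_a$ — that is, that the observed trajectories probe the whole domain $\A$ (or, alternatively, restricting $\F$ and its norm to the attainable set of $\D$ and carrying the argument out there). Everything after that reduction is a direct transcription into the language of best approximation in normed spaces together with the definition of a Chebyshev set, so the substantive content is concentrated in this equivalence.
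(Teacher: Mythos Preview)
Your proposal is correct and follows essentially the same route as the paper: reduce the trajectory constraint to a functional identity $F_p+F_a=F$ on the relevant domain, recognize the resulting problem as the metric projection of $F$ onto $\F_p$, and invoke the Chebyshev property for uniqueness. The paper handles the domain issue you flag by taking your parenthetical alternative as the primary argument---it defines $\A$ as the reachable set of $\D$ and reconstructs $F^{\D}$ pointwise from the observed derivatives there---rather than appealing to ODE existence theory (which would not quite fit, since $\D$ is a fixed dataset of trajectories, not the set of all solutions).
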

The Chebyshev assumption condition is strictly stronger than proximinality but is still quite mild and necessary. Indeed, in practice, many sets of interest are Chebyshev, including all closed convex spaces in strict normed spaces and, if $\F = L^2$, $\F_p$ can be any closed convex set, including all finite dimensional subspaces. In particular, all examples considered in the experiments are Chebyshev sets.

Propositions \ref{prop:exist_unique} and \ref{prop:unique} provide, under mild conditions, the theoretical guarantees for the  APHYNITY formulation to infer the correct MB/ML decomposition, thus enabling both recovering the proper physical parameters and accurate forecasting.

\subsection{Solving APHYNITY with deep neural networks\label{subsec:learning}}
In the following, both terms of the decomposition are parametrized and are denoted as $F_p^{\theta_p}$ and $F_a^{\theta_a}$. Solving APHYNITY then consists in estimating the parameters $\theta_p$ and $\theta_a$. $\theta_p$ are the physical parameters and are typically low-dimensional, \eg 2 or 3 in our experiments for the considered physical models. For $F_a$, we need sufficiently expressive models able to optimize over all $\F$: we thus use deep neural networks, which have shown promising performances for the approximation of differential equations~\cite{raissi2017physics,ayed2019learning}. 

When learning the parameters of $F_p^{\theta_p}$ and $F_a^{\theta_a}$, we have access to a finite dataset of trajectories discretized with a given temporal resolution $\Delta t$: $\D_{\textrm{train}} = \{(X^{(i)}_{k\Delta t})_{0\leq k\leq \left \lfloor{\nicefrac{T}{\Delta t}}\right \rfloor} \}_{1\leq i\leq N}$. Solving \eqref{eq:opt} requires estimating the state derivative $\nicefrac{\diff X_t}{\diff t}$ appearing in the constraint term. One solution is to approximate this derivative using \eg finite differences as in \citeasnoun{brunton2016discovering}, \citeasnoun{greydanus2019hamiltonian}, and \citeasnoun{cranmer2020lagrangian}. This numerical scheme requires high space and time resolutions in the observation space in order to get reliable gradient estimates. Furthermore it is often unstable, leading to explosive numerical errors as discussed in \ref{app:der_superv}. We propose instead to solve \eqref{eq:opt} using an integral trajectory-based approach: we compute $\widetilde{X}^i_{k\Delta t, X_0}$ from an initial state $X^{(i)}_0$ using the current $F_p^{\theta_p} + F_a^{\theta_a}$ dynamics, then enforce the constraint $\widetilde{X}^i_{k\Delta t, X_0}=X^i_{k\Delta t}$. This leads to our final objective function on $(\theta_p, \theta_a)$: 
\begin{equation}
\label{eq:opt_final}
\min\limits_{\theta_p, \theta_a} \left\Vert  F_a^{\theta_a}  \right\Vert ~~~
\mathrm{subject~to} ~~~~ \forall i, \forall k, \widetilde{X}^{(i)}_{k\Delta t} = X^{(i)}_{k\Delta t}
\end{equation}
where  $\widetilde{X}^{(i)}_{k\Delta t}$ is the approximate solution of the integral $X_0^{(i)}+\int_{0}^{k \Delta t} (F_p^{\theta_p} + F_a^{\theta_a})(X_s) \diff s$ obtained by a differentiable ODE solver. 

In our setting, where we consider situations for which $F^{\theta_p}_p$ only partially describes the physical phenomenon, this coupled MB + ML formulation leads to different parameter estimates than using the MB formulation alone, as analyzed more thoroughly in \ref{app:alt_methods}. Interestingly, our experiments show that using this formulation also leads to a better identification of the physical parameters $\theta_p$ than when fitting the simplified physical model $F^{\theta_p}_p$ alone~(\Secref{sec:expes}). With only an incomplete knowledge on the physics, $\theta_p$ estimator will be biased by the additional dynamics which needs to be fitted in the data. \ref{app:ablation} also confirms that the integral formulation gives better forecasting results and a more stable behavior than supervising over finite difference approximations of the derivatives.

\subsection{Adaptively constrained optimization\label{subsec:optim}}

\begin{wrapfigure}{r}{0.45\textwidth}
\begin{algorithm}[H]
\SetAlgoLined
Initialization: $\lambda_0\geq0, \tau_1 >0, \tau_2>0$\;
\For{epoch = $1:N_{epochs}$} {
\For{iter in $1:N_{iter}$}{
\For{batch in $1:B$}{
$\theta_{j+1} = \theta_j - \tau_1 \nabla \left[ \lambda_j\mathcal{L}_{traj}(\theta_j) + \left\Vert F_a \right\Vert  \right]  \;$
}
 }
 $\lambda_{j+1} =  \lambda_j +$ $\tau_2\mathcal{L}_{traj}(\theta_{j+1}) \;$
 }
  \caption{\label{alg:optim} APHYNITY}
\end{algorithm}
\vspace{-20pt}
\end{wrapfigure}

The formulation in \eqref{eq:opt_final} involves constraints which are difficult to enforce exactly in practice.
We considered a variant of the method of multipliers~\cite{constrained_optim} which uses a sequence of Lagrangian relaxations $\mathcal{L}_{\lambda_j}(\theta_p, \theta_a)$:
 \begin{equation}
 \label{eq:opt_final_relaxed}
 \fl \centering
~~~ \mathcal{L}_{\lambda_j}(\theta_p, \theta_a) = \|F_a^{\theta_a}\| + \lambda_j \cdot \mathcal{L}_{traj}(\theta_p, \theta_a)
 \end{equation}
where $\mathcal{L}_{traj}(\theta_p, \theta_a) = \sum_{i=1}^N\sum_{h=1}^{T/\Delta t} \|X^{(i)}_{h\Delta t} -  \widetilde{X}^{(i)}_{h\Delta t}   \|$.

This method needs an increasing sequence $(\lambda_j)_j$ such that the successive minima of $\mathcal{L}_{\lambda_j}$ converge to a solution~(at least a local one) of the constrained problem \eqref{eq:opt_final}. We select $(\lambda_j)_j$ by using an iterative strategy: starting from a value $\lambda_0$, we iterate, minimizing $\mathcal{L}_{\lambda_j}$ by gradient descent\footnote{Convergence to a local minimum isn't necessary, a few steps are often sufficient for a successful optimization.}, then update $\lambda_j$ with:
$\lambda_{j+1} = \lambda_j + \tau_2\mathcal{L}_{traj}(\theta_{j+1})$, where $\tau_2$ is a chosen hyper-parameter and $\theta = (\theta_p, \theta_a)$. This procedure is summarized in Algorithm~\ref{alg:optim}. This adaptive iterative procedure allows us to obtain stable and robust results, in a reproducible fashion, as shown in the experiments.

\section{Experimental validation\label{sec:expes}}

We validate our approach on 3 classes of challenging physical dynamics: reaction-diffusion, wave propagation, and the damped pendulum, representative of various application domains such as chemistry, biology or ecology (for reaction-diffusion) and earth physic, acoustic, electromagnetism or even neuro-biology (for waves equations). The two first dynamics are described by PDEs and thus in practice should be learned from very high-dimensional vectors, discretized from the original compact domain. This makes the learning much more difficult than from the one-dimensional pendulum case. For each problem, we investigate the cooperation between physical models of increasing complexity encoding incomplete knowledge of the dynamics~(denoted \textit{Incomplete physics} in the following) and data-driven models. We show the relevance of APHYNITY~(denoted \textit{APHYNITY models}) both in terms of forecasting accuracy and physical parameter identification.

\subsection{Experimental setting\label{subsec:exp_sett}}

We describe the three families of equations studied in the experiments. In all experiments, $\F=\mathcal{L}^2(\A)$ where $\A$ is the set of all admissible states for each problem, and the $\mathcal{L}^2$ norm is computed on $\D_{train}$ by: $\|F\|^2 \approx \sum_{i,k}\|F(X^{(i)}_{k\Delta t})\|^2$. All considered sets of physical functionals $\F_p$ are closed and convex in $\F$ and thus are Chebyshev. In order to enable the evaluation on both prediction and parameter identification, all our experiments are conducted on simulated datasets with known model parameters. Each dataset has been simulated using an appropriate high-precision integration scheme for the corresponding equation. All solver-based models take the first state $X_0$ as input and predict the remaining time-steps by integrating $F$ through the same differentiable generic and common ODE solver~(4th order Runge-Kutta)\footnote{This integration scheme is then different from the one used for data generation, the rationale for this choice being that when training a model one does not know how exactly the data has been generated.}. Implementation details and architectures are given in \ref{app:implementation}.

\paragraph{Reaction-diffusion equations}
We  consider a 2D FitzHugh-Nagumo type model~\cite{klaasen1984fitzhugh}. The system is driven by the PDE 
\(
        \frac{\partial u}{\partial t} = a\Delta u + R_u(u,v; k),
    \frac{\partial v}{\partial t} = b\Delta v + R_v(u,v)\)
where $a$ and $b$ are respectively the diffusion coefficients of $u$ and $v$, $\Delta$ is the Laplace operator. The local reaction terms are $R_u(u,v; k) = u - u^3 - k - v, R_v(u,v) = u - v$. The state is $X=(u,v)$ and is defined over a compact rectangular domain $\Omega$ with periodic boundary conditions. The considered physical models are: \begin{itemize*}
    \item \textit{Param PDE ($a,b$)}, with unknown ($a,b$) diffusion terms and without reaction terms: $\F_p = \{F_p^{a,b}:(u,v) \mapsto (a\Delta u, b\Delta v)\ | \ a\geq a_{\min} >0, b\geq b_{\min}>0\}$; 
    \item \textit{Param PDE ($a,b,k$)}, the full PDE with unknown parameters: $\F_p = \{F_p^{a,b,k} : (u,v)\mapsto \linebreak (a\Delta u + R_u(u,v;k),   b\Delta v + R_v(u,v)\ |\ a\geq a_{\min} > 0, b\geq b_{\min} > 0, k\geq k_{\min} > 0\}$.
\end{itemize*}

\paragraph{Damped wave equations}
We investigate the damped-wave PDE:
\(
    \frac{\partial^2 w}{\partial t^2} - c^2\Delta w + k \frac{\partial w}{\partial t}=0
\)
where $k$ is the damping coefficient. The state is $X=(w, \frac{\partial w}{\partial t})$ and we consider a compact spatial domain $\Omega$ with Neumann homogeneous boundary conditions. Note that this damping differs from the pendulum, as its effect is global. Our physical models are:
\begin{itemize*}
    \item \textit{Param PDE ($c$)}, without damping term: $\F_p = \{F^c_p: (u, v) \mapsto (v, c^2 \Delta u) \ |\ c\in[\epsilon,+\infty) \textsc{ with } \epsilon>0\}$;
    \item \textit{Param PDE ($c,k$)}: $\F_p = \{F^{c,k}_p:(u, v) \mapsto (v, c^2 \Delta u - kv)\ | \ c, k\in[\epsilon,+\infty) \textsc{ with } \epsilon>0\}$.
\end{itemize*}

\paragraph{Damped pendulum}
The evolution follows the ODE $\nicefrac{\diff^2\theta}{\diff t^2} + \omega_0^2 \sin \theta  +\alpha \nicefrac{\diff\theta}{\diff t} = 0$, where $\theta(t)$ is the angle, $\omega_0$ the proper pulsation ($T_0$ the period) and $\alpha$ the damping coefficient. With state $X = (\theta, \nicefrac{\diff\theta}{\diff t})$, the ODE is $F_p^{\omega_0, \alpha}: X \mapsto (\nicefrac{\diff\theta}{\diff t},  - \omega_0^2 \sin \theta  - \alpha \nicefrac{\diff\theta}{\diff t})$. Our physical models are:
\begin{itemize*}
    \item \textit{Hamiltonian~}\cite{greydanus2019hamiltonian}, a conservative approximation, with $\F_p = \{F^{\mathcal{H}}_p: (u,v) \mapsto (\partial_y \mathcal{H}(u,v), -\partial_x \mathcal{H}(u,v))\ |\ \mathcal{H}\in H^1(\R^2)\}$, $H^1(\R^2)$ is the first order Sobolev space.  
    \item \textit{Param ODE ($\omega_0$)}, the frictionless pendulum: $\F_p = \{F^{\omega_0,\alpha=0}_p | \ \omega_0\in[\epsilon,+\infty) \textsc{ with } \epsilon>0\}$
    \item \textit{Param ODE ($\omega_0, \alpha$)}, the full pendulum equation: $\F_p = \{F^{\omega_0,\alpha}_p | \ \omega_0, \alpha\in[\epsilon,+\infty)\textsc{ with } \epsilon>0\}$.
\end{itemize*}

\paragraph{Baselines}
As purely data-driven baselines, we use Neural ODE~\cite{chen2018neural} for the three problems and PredRNN++~(\citeasnoun{wang2018predrnnta}, for reaction-diffusion only) which are competitive models for datasets generated by differential equations and  for spatio-temporal data. As MB/ML methods, in the ablations studies (see \ref{app:ablation}), we compare for all problems, to the vanilla MB/ML cooperation scheme found in \citeasnoun{wang2019integrating} and \citeasnoun{neural20}. We also show results for \textit{True PDE/ODE}, which corresponds to the equation for data simulation~(which do not lead to zero error due to the difference between simulation and training integration schemes). For the pendulum, we compare to Hamiltonian neural networks \cite{greydanus2019hamiltonian,toth2019hamiltonian} and to the the deep Galerkin method (DGM, \citeasnoun{sirignano2018dgm}). See additional details in \ref{app:implementation}.

\subsection{Results}
\label{sec:results}

\begin{table}[t!]
    \caption{Forecasting and identification results on the (a) reaction-diffusion, (b) wave equation, and (c) damped pendulum datasets. We set for (a) $a=1\times 10^{-3}, b=5\times 10^{-3}, k=5\times 10^{-3}$, for (b) $c=330$, $k=50$ and for (c) $T_0=6$, $\alpha=0.2$ as true parameters. $\log$ MSEs are computed respectively over 25, 25, and 40 predicted time-steps. \%Err param. averages the results when several physical parameters are present.  For each level of incorporated physical knowledge, equivalent best results according to a Student t-test are shown in bold. n/a corresponds to non-applicable cases. 
    \label{tab:pendulum}}
    \centering
    \footnotesize
\resizebox{\textwidth}{!}{
    \begin{tabular}{cclccc}
    \toprule
Dataset & & Method & $\log$ MSE & \%Err param. & $\|F_a\|^2$ \\ 
    \midrule
\multirowcell{8}{\parbox{0.7cm}{\centering\tiny (a) Reaction-diffusion}} & \multirowcell{2}{\parbox{0.9cm}{\centering\tiny Data-driven}} &  
Neural ODE 
& -3.76$\pm$0.02 & n/a & n/a \\
&& PredRNN++ 
& -4.60$\pm$0.01  & n/a & n/a\\\cmidrule{2-6}
& \multirowcell{2}{\parbox{0.9cm}{\centering\tiny Incomplete physics}} &Param PDE ($a,b$) & -1.26$\pm$0.02 & 67.6 & n/a\\ 
&& APHYNITY Param PDE ($a,b$) & \textbf{-5.10$\pm$0.21} & \textbf{2.3} &  67 \\\cmidrule{2-6}
&\multirowcell{4}{\parbox{0.9cm}{\centering\tiny Complete physics}}&  Param PDE ($a,b,k$) & \textbf{-9.34$\pm$0.20} & 0.17 & n/a\\
&&  APHYNITY Param PDE ($a,b,k$) & \textbf{-9.35$\pm$0.02} & \textbf{0.096} & 1.5e-6\\ 
  \cdashline{3-6}\noalign{\vskip 0.2ex}
&&  True PDE & -8.81$\pm$0.05 & n/a  & n/a\\ 
&&  APHYNITY True PDE & \textbf{-9.17$\pm$0.02} & n/a & 1.4e-7\\ \midrule
\multirowcell{8}{\parbox{0.8cm}{\centering\tiny (b)\\ Wave equation}} &\centering\tiny Data-driven & Neural ODE
& -2.51$\pm$0.29 & n/a  & n/a  \\ \cmidrule{2-6}
&\multirowcell{2}{\parbox{0.9cm}{\centering\tiny Incomplete physics}} & Param PDE ($c$) & 0.51$\pm$0.07 & 10.4 & n/a  \\
&& APHYNITY Param PDE ($c$) & \textbf{-4.64$\pm$0.25} & \textbf{0.31} & 71. \\ \cmidrule{2-6}
&\multirowcell{4}{\parbox{0.8cm}{\centering\tiny Complete physics}} & Param PDE $(c,k)$ & -4.68$\pm$0.55 & 1.38 & n/a \\
&& APHYNITY Param PDE $(c, k)$ & \textbf{-6.09$\pm$0.28} &  \textbf{0.70} & 4.54 \\ \cdashline{3-6}\noalign{\vskip 0.2ex}
&& True PDE  & -4.66$\pm$0.30 & n/a & n/a \\ 
&& APHYNITY True PDE & \textbf{-5.24$\pm$0.45} & n/a & 0.14 \\\midrule
\multirowcell{9}{\parbox{1cm}{\centering \tiny (c) \\Damped pendulum}} & \centering\tiny Data-driven & Neural ODE 
& -2.84$\pm$0.70 & n/a & n/a   \\ \cmidrule{2-6}
&\multirowcell{5}{\parbox[c]{1cm}{\centering \tiny Incomplete physics}}  & Hamiltonian 
& -0.35$\pm$0.10  & n/a & n/a \\
&& APHYNITY Hamiltonian   & \textbf{-3.97$\pm$1.20}    & n/a & 623 \\ \cdashline{3-6}\noalign{\vskip 0.2ex}
&& Param ODE ($\omega_0$) & -0.14$\pm$0.10  & 13.2  & n/a   \\
&& Deep Galerkin Method ($\omega_0$)  & -3.10$\pm$0.40 & 22.1 & n/a \\
&& APHYNITY Param ODE ($\omega_0$) &  \textbf{-7.86$\pm$0.60}  & \textbf{4.0}  &132  \\ \cmidrule{2-6}
&\multirowcell{5}{\parbox{0.8cm}{\centering\tiny Complete physics}} & Param ODE ($\omega_0, \alpha$)  & \textbf{-8.28$\pm$0.40}  & \textbf{0.45}   & n/a    \\
&& Deep Galerkin Method ($\omega_0,\alpha$)  & -3.14$\pm$0.40 & 7.1 & n/a \\
&& APHYNITY Param ODE ($\omega_0, \alpha$)  & \textbf{-8.31$\pm$0.30} & \textbf{0.39} & 8.5  \\ \cdashline{3-6}\noalign{\vskip 0.2ex}
&& True ODE  & \textbf{-8.58$\pm$0.20}  & n/a & n/a \\ 
&& APHYNITY True ODE   & \textbf{-8.44$\pm$0.20}  & n/a & 2.3  \\ 
\bottomrule
\end{tabular}
}
    \label{tab:results-for-all}
\end{table}

We analyze and discuss below the results obtained for the three kind of dynamics. We successively examine different evaluation or quality criteria. The conclusions are consistent for the three problems, which allows us to highlight clear trends for all of them. 

\paragraph{Forecasting accuracy}
The data-driven models do not perform well compared to \textit{True PDE/ODE} (all values are test errors expressed as $\log$ MSE): -4.6 for PredRNN++ vs. -9.17 for reaction-diffusion, -2.51 vs. -5.24 for wave equation, and -2.84 vs. -8.44 for the pendulum in Table~\ref{tab:results-for-all}. The Deep Galerkin method for the pendulum in complete physics \textit{DGM ($\omega_0,\alpha$)}, being constrained by the equation, outperforms Neural ODE but is far inferior to APHYNITY models. In the incomplete physics  case, \textit{DGM ($\omega_0$)} fails to compensate for the missing information. The \textit{incomplete physical models}, \textit{Param PDE ($a,b$)} for the reaction-diffusion, \textit{Param PDE ($c$)} for the wave equation, and \textit{Param ODE ($\omega_0$)} and \textit{Hamiltonian models} for the damped pendulum, have even poorer performances than purely data-driven ones, as can be expected since they ignore important dynamical components, \eg friction in the pendulum case. Using APHYNITY with these imperfect physical models greatly improves forecasting accuracy in all cases, significantly outperforming purely data-driven models, and reaching results often close to the accuracy of the true ODE, when APHYNITY and the true ODE models are integrated with the same numerical scheme~(which is different from the one used for data generation, hence the non-null errors even for the true equations), \eg -5.92 vs. -5.24 for wave equation in Table~\ref{tab:results-for-all}. This clearly highlights the capacity of our approach to augment incomplete physical models with a learned data-driven component.

\paragraph{Physical parameter estimation}
Confirming the phenomenon mentioned in the introduction and detailed in \ref{app:alt_methods}, incomplete physical models can lead to bad estimates for the relevant physical parameters: an error respectively up to 67.6\% and 10.4\% for parameters in the reaction-diffusion and wave equations, and an error of more than 13\% for parameters for the pendulum in \Tabref{tab:results-for-all}. APHYNITY is able to significantly improve physical parameters identification: 2.3\% error for the reaction-diffusion, 0.3\% for the wave equation, and 4\% for the pendulum. This validates the fact that augmenting a simple physical model to compensate its approximations is not only beneficial for prediction, but also helps to limit errors for parameter identification when dynamical models do not fit data well. This is crucial for interpretability and explainability of the estimates.

\paragraph{Ablation study}
  We conduct ablation studies to validate the importance of the APHYNITY augmentation compared to a naive strategy consisting in learning $F=F_p+F_a$ without taking care on the quality of the decomposition, as done in \citeasnoun{wang2019integrating} and \citeasnoun{neural20}. Results shown in \Tabref{tab:results-for-all} of \ref{app:ablation} show a consistent gain of APHYNITY for the three use cases and for all physical models:  for instance for \textit{Param ODE ($a,b$)} in reaction-diffusion, both forecasting performances ($\log \textrm{MSE}=$-5.10 vs. -4.56) and identification parameter (Error$=$ 2.33\% vs. 6.39\%) improve. Other ablation results are provided in \ref{app:ablation} showing the relevance of the the trajectory-based approach described in Section~\ref{subsec:learning}~(vs supervising over finite difference approximations of the derivative $F$). 

\paragraph{Flexibility}
When applied to complete physical models, APHYNITY does not degrade accuracy, contrary to a vanilla cooperation scheme (see ablations in \ref{app:ablation}). This is due to the least action principle of our approach: when the physical knowledge is sufficient for properly predicting the observed dynamics, the model learns to ignore the data-driven augmentation. This is shown by the norm of the trained neural net component $F_a$, which is reported in \Tabref{tab:results-for-all} last column: as expected, $\|F_a\|^2$ diminishes as the complexity of the corresponding physical model increases, and, relative to incomplete models, the norm becomes very small for complete physical models~(for example in the pendulum experiments, we have $\Vert F_a \Vert= 8.5$ for the APHYNITY model to be compared with 132 and 623 for the incomplete models). Thus, we see that the norm of $F_a$ is a good indication of how imperfect the physical models $\F_p$ are. It highlights the flexibility of APHYNITY to successfully adapt to very different levels of prior knowledge. Note also that APHYNITY sometimes slightly improves over the true ODE, as it compensates the error introduced by different numerical integration methods for data simulation and training (see \ref{app:implementation}).

\begin{figure}[t!]
    \centering
    \subfloat[Param PDE ($a$, $b$), diffusion-only
    ]{
    \includegraphics[width=0.32\textwidth]{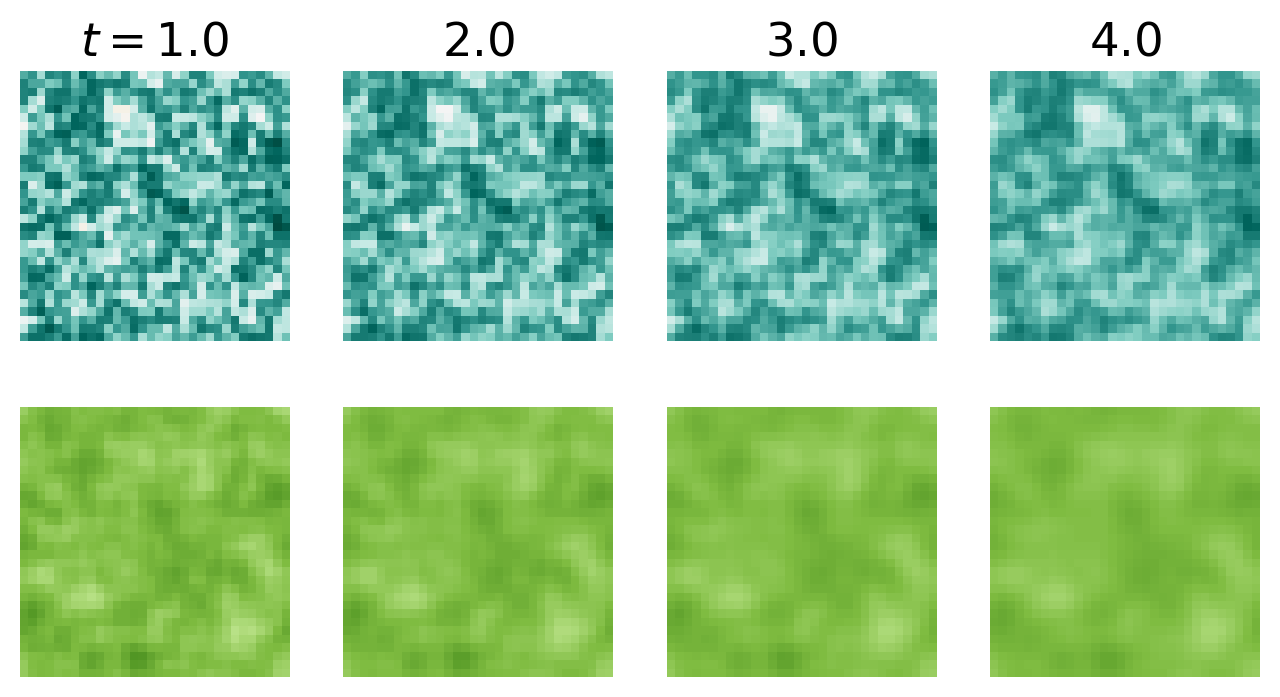}}
      \hfill
    \subfloat[APHYNITY Param PDE ($a$, $b$)
    ]{
    \includegraphics[width=0.32\textwidth]{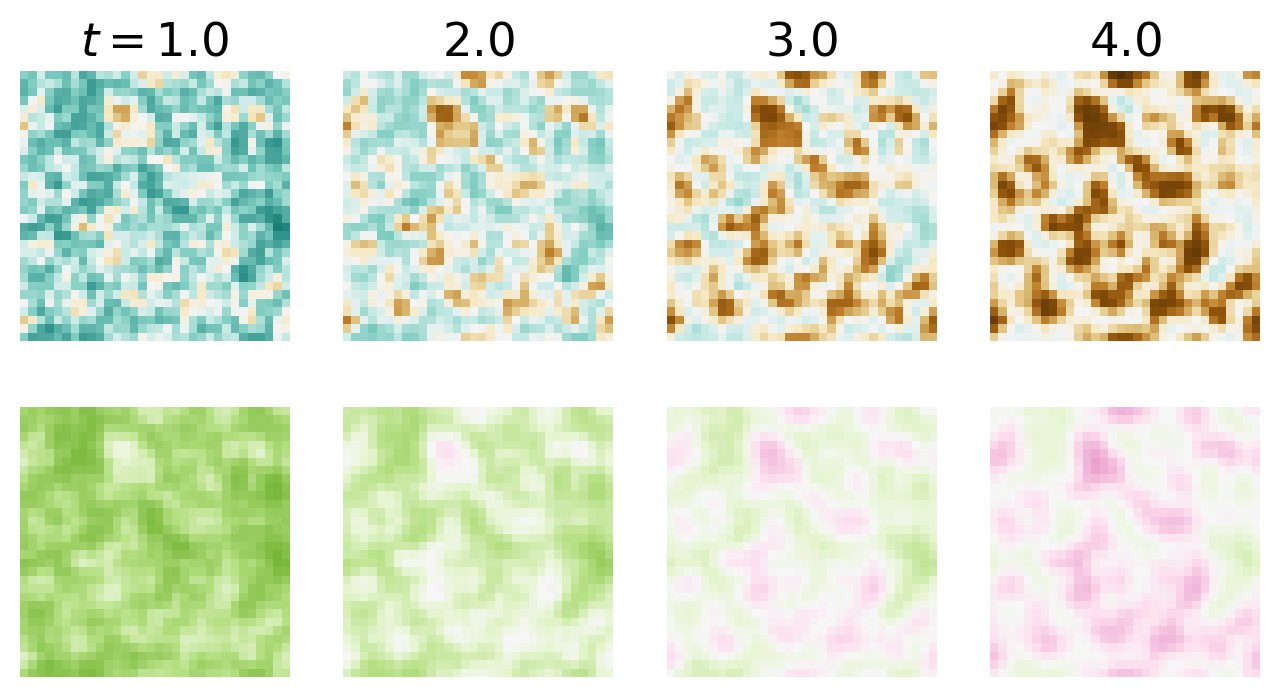}}\hfill
    \subfloat[Ground truth simulation
    ]{
    \includegraphics[width=0.32\textwidth]{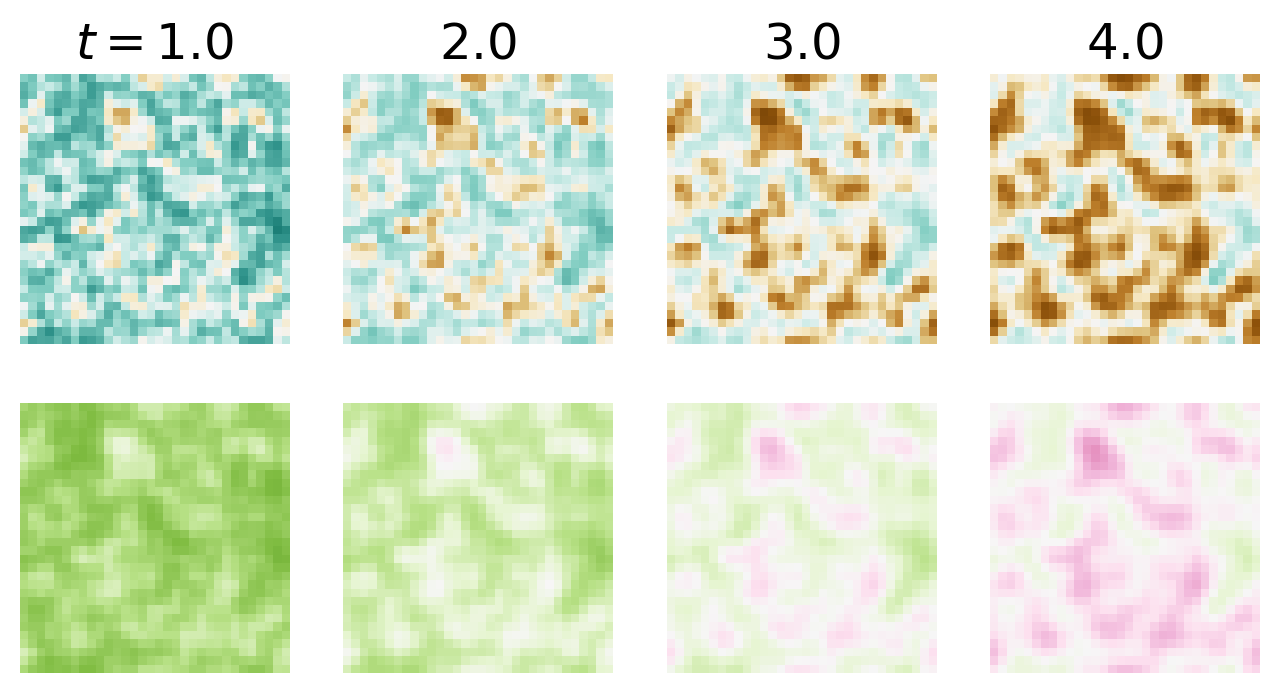}}
    \caption{Comparison of predictions of two components $u$ (top) and $v$ (bottom) of the reaction-diffusion system. Note that $t=4$ is largely beyond the dataset horizon ($t=2.5$).
    \label{fig:reaction-diffusion-demo}
    }
\end{figure}

\begin{figure}[t!]
    \centering
    \subfloat[Neural ODE
    ]{
    \includegraphics[width=0.32\textwidth]{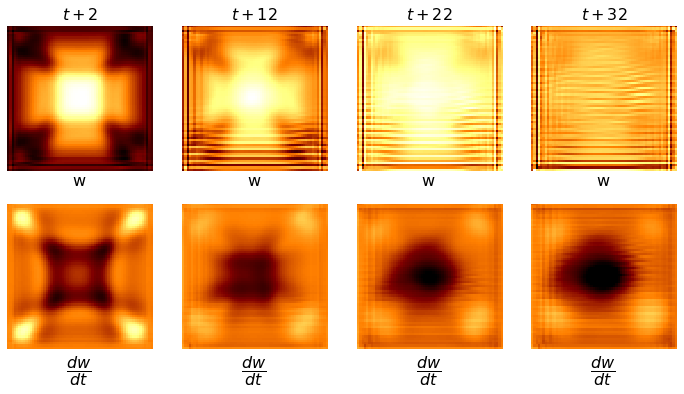}}
    \hfill
    \subfloat[APHYNITY Param PDE ($c$)
    ]{
    \includegraphics[width=0.32\textwidth]{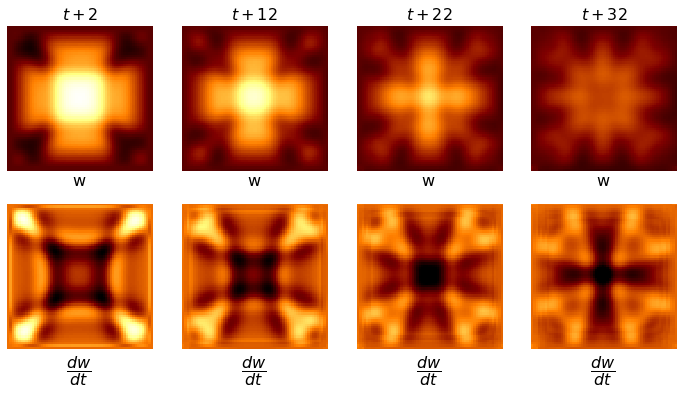}}\hfill
    \subfloat[Ground truth simulation
    ]{
    \includegraphics[width=0.32\textwidth]{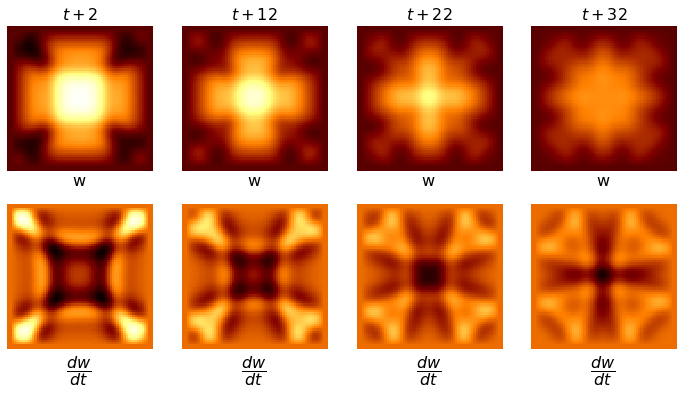}}
    \caption{Comparison between the prediction of APHYNITY when $c$ is estimated and Neural ODE for the damped wave equation. Note that $t+32$, last column for (a, b, c) is already beyond the training time horizon ($t+25$), showing the consistency of APHYNITY method.\label{fig:wave-damped-demo}
    }
\end{figure}

\paragraph{Qualitative visualizations}
Results in \Figref{fig:reaction-diffusion-demo} for reaction-diffusion show that the incomplete diffusion parametric PDE in \Figref{fig:reaction-diffusion-demo}(a) is unable to properly match ground truth simulations: the behavior of the two components in \Figref{fig:reaction-diffusion-demo}(a) is reduced to simple independent diffusions due to the lack of interaction terms  between $u$ and $v$. By using APHYNITY in \Figref{fig:reaction-diffusion-demo}(b), the correlation between the two components appears together with the formation of Turing patterns, which is very similar to the ground truth. This confirms that $F_a$ can learn the reaction terms and improve prediction quality. In \Figref{fig:wave-damped-demo}, we see for the wave equation that the data-driven Neural ODE model fails at approximating $\nicefrac{\diff w}{\diff t}$ as the forecast horizon increases: it misses crucial details for the second component $\nicefrac{\diff w}{\diff t}$ which makes the forecast diverge from the ground truth. APHYNITY incorporates a Laplacian term as well as the data-driven $F_a$ thus capturing the damping phenomenon and succeeding in maintaining physically sound results for long term forecasts, unlike Neural ODE.
 
\paragraph{Extension to non-stationary dynamics}
We provide additional results in \ref{app:additional} to tackle datasets where physical parameters of the equations vary in each sequence. To this end, we design an encoder able to perform parameter estimation for each sequence. Results show that APHYNITY accommodates well to this setting, with similar trends as those reported in this section.

\paragraph{Additional illustrations}
We give further visual illustrations to demonstrate how the estimation of parameters in incomplete physical models is improved with APHYNITY. For the reaction-diffusion equation, we show that the incomplete parametric PDE underestimates both diffusion coefficients. The difference is visually recognizable between the poorly estimated diffusion (\Figref{fig:comp-diffusion}(a)) and the true one (\Figref{fig:comp-diffusion}(c)) while APHYNITY gives a fairly good estimation of those diffusion parameters as shown in \Figref{fig:comp-diffusion}(b).

\begin{figure}[t]
\centering
\subfloat[$a=0.33\times 10^{-3}, b=0.94\times 10^{-3} $, diffusion estimated with Param PDE $(a,b)$]{\includegraphics[width=0.32\textwidth]{figs/reaction_diffusion/physical.png}}\hfill
\subfloat[$a=0.97\times 10^{-3}, b=4.75\times 10^{-3}$, diffusion estimated with APHYNITY Param PDE $(a,b)$]{\includegraphics[width=0.32\textwidth]{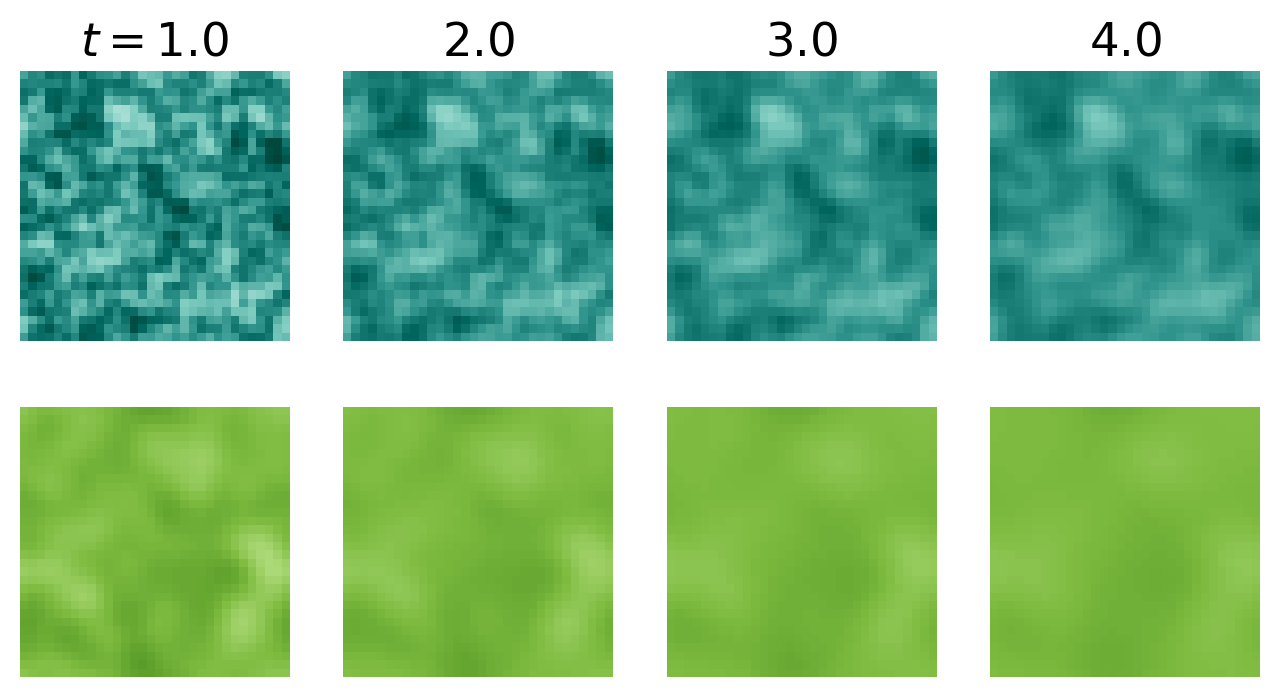}} \hfill
\subfloat[$a=1.0\times 10^{-3}, b=5.0\times 10^{-3}$, true diffusion]{\includegraphics[width=0.32\textwidth]{figs/reaction_diffusion/aphynity_phys_est.png}}

\caption{Diffusion predictions using coefficient learned with (a) incomplete physical model Param PDE $(a,b)$ and (b) APHYNITY-augmented Param PDE$(a,b)$, compared with the (c) true diffusion\label{fig:comp-diffusion}}
\end{figure}

\section{Conclusion}
 \label{discussion}
In this work, we introduce the APHYNITY framework that can efficiently augment approximate physical models with deep data-driven networks, performing similarly to models for which the underlying dynamics are entirely known. We exhibit the superiority of APHYNITY over data-driven, incomplete physics, and state-of-the-art approaches combining ML and MB methods, both in terms of forecasting and parameter identification on three various classes of physical systems. Besides, APHYNITY is flexible enough to adapt to different approximation levels of prior physical knowledge.

An appealing perspective is the applicability of APHYNITY on partially-observable settings, such as video prediction. Besides, we hope that the APHYNITY framework will open up the way to the design of a wide range of more flexible MB/ML models, \eg in climate science, robotics or reinforcement learning. In particular, analyzing the theoretical decomposition properties in a partially-observed setting is an important direction for future work. 

\section*{Acknowledgements}
Part of this work has been supported by project DL4CLIM, ANR-19-CHIA-0018-01.

\section*{References}

\bibliographystyle{dcu} 
\bibliography{refs.bib}

@article{dona2020pde,
  title={PDE-driven spatiotemporal disentanglement},
  author={Don{\`a}, J{\'e}r{\'e}mie and Franceschi, Jean-Yves and Lamprier, Sylvain and Gallinari, Patrick},
  journal={International Conference on Learning Representations (ICLR)},
  year={2020}
}

@book{li2014disturbance,
  title={Disturbance observer-based control: methods and applications},
  author={Li, Shihua and Yang, Jun and Chen, Wen-Hua and Chen, Xisong},
  year={2014},
  publisher={CRC press}
}

@article{chen2004disturbance,
  title={Disturbance observer based control for nonlinear systems},
  author={Chen, Wen-Hua},
  journal={IEEE/ASME transactions on mechatronics},
  volume={9},
  number={4},
  pages={706--710},
  year={2004},
  publisher={IEEE}
}

@inproceedings{nagabandi2018neural,
  title={Neural network dynamics for model-based deep reinforcement learning with model-free fine-tuning},
  author={Nagabandi, Anusha and Kahn, Gregory and Fearing, Ronald S and Levine, Sergey},
  booktitle={2018 IEEE International Conference on Robotics and Automation (ICRA)},
  pages={7559--7566},
  year={2018},
  organization={IEEE}
}

@inproceedings{rico1994continuous,
  title={Continuous-time nonlinear signal processing: a neural network based approach for gray box identification},
  author={Rico-Martinez, R and Anderson, JS and Kevrekidis, IG},
  booktitle={Proceedings of IEEE Workshop on Neural Networks for Signal Processing},
  pages={596--605},
  year={1994},
  organization={IEEE}
}

@article{thompson1994modeling,
  title={Modeling chemical processes using prior knowledge and neural networks},
  author={Thompson, Michael L and Kramer, Mark A},
  journal={AIChE Journal},
  volume={40},
  number={8},
  pages={1328--1340},
  year={1994},
  publisher={Wiley Online Library}
}

@article{psichogios1992hybrid,
  title={A hybrid neural network-first principles approach to process modeling},
  author={Psichogios, Dimitris C and Ungar, Lyle H},
  journal={AIChE Journal},
  volume={38},
  number={10},
  pages={1499--1511},
  year={1992},
  publisher={Wiley Online Library}
}

@inproceedings{shi2015convolutional,
  title={Convolutional \textsc{LSTM} network: A machine learning approach for precipitation nowcasting},
  author={Shi, Xingjian and Chen, Zhourong and Wang, Hao and Yeung, Dit-Yan and Wong, Wai-Kin and Woo, Wang-chun},
  booktitle={Advances in neural information processing systems (NeurIPS)},
  pages={802--810},
  year={2015}
}

@article{gentine,
author = {Gentine, P. and Pritchard, M. and Rasp, S. and Reinaudi, G. and Yacalis, G.},
title = {Could Machine Learning Break the Convection Parameterization Deadlock?},
journal = {Geophysical Research Letters},
volume = {45},
number = {11},
pages = {5742-5751},
eprint = {https://agupubs.onlinelibrary.wiley.com/doi/pdf/10.1029/2018GL078202},
year = {2018}
}

@inproceedings{epnet,
  author    = {Ibrahim Ayed and
               Nicolas Cedilnik and
               Patrick Gallinari and
               Maxime Sermesant},
  editor    = {Yves Coudi{\`{e}}re and
               Val{\'{e}}ry Ozenne and
               Edward J. Vigmond and
               Nejib Zemzemi},
  title     = {EP-Net: Learning Cardiac Electrophysiology Models for Physiology-Based
               Constraints in Data-Driven Predictions},
  booktitle = {Functional Imaging and Modeling of the Heart - 10th International
               Conference, {FIMH} 2019, Bordeaux, France, June 6-8, 2019, Proceedings},
  series    = {Lecture Notes in Computer Science},
  volume    = {11504},
  pages     = {55--63},
  publisher = {Springer},
  year      = {2019},
  timestamp = {Fri, 31 May 2019 10:27:51 +0200},
  biburl    = {https://dblp.org/rec/conf/fimh/AyedCGS19.bib},
  bibsource = {dblp computer science bibliography, https://dblp.org}
}

@article{courtier1994strategy,
  title={A strategy for operational implementation of 4D-Var, using an incremental approach},
  author={Courtier, Philippe and Th{\'e}paut, J-N and Hollingsworth, Anthony},
  journal={Quarterly Journal of the Royal Meteorological Society},
  volume={120},
  number={519},
  pages={1367--1387},
  year={1994},
  publisher={Wiley Online Library}
}

@article{kalman1960new,
  title={A new approach to linear filtering and prediction problems},
  author={Kalman, Rudolph Emil},
  year={1960}
}

@article{sirignano2018dgm,
  title={DGM: A deep learning algorithm for solving partial differential equations},
  author={Sirignano, Justin and Spiliopoulos, Konstantinos},
  journal={Journal of computational physics},
  volume={375},
  pages={1339--1364},
  year={2018},
  publisher={Elsevier}
}

@article{becker2019recurrent,
  title={Recurrent kalman networks: Factorized inference in high-dimensional deep feature spaces},
  author={Becker, Philipp and Pandya, Harit and Gebhardt, Gregor and Zhao, Cheng and Taylor, James and Neumann, Gerhard},
  journal={International Conference on Machine Learning (ICML)},
  year={2019}
}

@inproceedings{janner2019trust,
  title={When to trust your model: Model-based policy optimization},
  author={Janner, Michael and Fu, Justin and Zhang, Marvin and Levine, Sergey},
  booktitle={Advances in Neural Information Processing Systems (NeurIPS)},
  pages={12519--12530},
  year={2019}
}

@article{pernot2017critical,
  title={A critical review of statistical calibration/prediction models handling data inconsistency and model inadequacy},
  author={Pernot, Pascal and Cailliez, Fabien},
  journal={AIChE Journal},
  volume={63},
  number={10},
  pages={4642--4665},
  year={2017},
  publisher={Wiley Online Library}
}

@article{dormand1980family,
  title={A family of embedded Runge-Kutta formulae},
  author={Dormand, John R and Prince, Peter J},
  journal={Journal of computational and applied mathematics},
  volume={6},
  number={1},
  pages={19--26},
  year={1980},
  publisher={Elsevier}
}

@inproceedings{choi2016retain,
  title={\textsc{RETAIN}: An interpretable predictive model for healthcare using reverse time attention mechanism},
  author={Choi, Edward and Bahadori, Mohammad Taha and Sun, Jimeng and Kulas, Joshua and Schuetz, Andy and Stewart, Walter},
  booktitle={Advances in Neural Information Processing Systems (NeurIPS)},
  pages={3504--3512},
  year={2016}
}

@inproceedings{rolnick2019tackling,
  title={Tackling climate change with machine learning},
  author={Rolnick, David and Donti, Priya L and Kaack, Lynn H and Kochanski, Kelly and Lacoste, Alexandre and Sankaran, Kris and Ross, Andrew Slavin and Milojevic-Dupont, Nikola and Jaques, Natasha and Waldman-Brown, Anna and others},
  booktitle={NeurIPS 2019 workshop on Climate Change with Machine Learning},
  year={2019}
}

@article{toubeau2018deep,
  title={Deep learning-based multivariate probabilistic forecasting for short-term scheduling in power markets},
  author={Toubeau, Jean-Fran{\c{c}}ois and Bottieau, J{\'e}r{\'e}mie and Vall{\'e}e, Fran{\c{c}}ois and De Gr{\`e}ve, Zacharie},
  journal={IEEE Transactions on Power Systems},
  volume={34},
  number={2},
  pages={1203--1215},
  year={2018},
  publisher={IEEE}
}

@article{ummenhofer2020,
  title={Lagrangian Fluid Simulation with Continuous Convolutions},
  author={Ummenhofer, Benjamin and Prantl, Lukas and Thuerey, Nils and Koltun , Vladlen},
  journal={International Conference on Learning Representations (ICLR)},
  year={2020}
}

@article{seo2020,
  title={Physics-aware Difference Graph Networks for Sparsely-Observed Dynamics},
  author={Seo, Sungyong and Meng, Chuizheng and    Liu, Yan},
  journal={International Conference on Learning Representations (ICLR)},
  year={2020}
}

@article{wang2019integrating,
  title={Integrating model-driven and data-driven methods for power system frequency stability assessment and control},
  author={Wang, Qi and Li, Feng and Tang, Yi and Xu, Yan},
  journal={IEEE Transactions on Power Systems},
  volume={34},
  number={6},
  pages={4557--4568},
  year={2019},
  publisher={IEEE}
}

@article{oreshkin2019n,
  title={\textsc{N-BEATS}: Neural basis expansion analysis for interpretable time series forecasting},
  author={Oreshkin, Boris N. and Carpov, Dmitri and Chapados, Nicolas and Bengio, Yoshua},
  journal={International Conference on Learning Representations (ICLR)},
  year={2020}
}

@article{saha2020phicnet,
  title={\textsc{PhICN}et: Physics-Incorporated Convolutional Recurrent Neural Networks for Modeling Dynamical Systems},
  author={Saha, Priyabrata and Dash, Saurabh and Mukhopadhyay, Saibal},
  journal={arXiv preprint arXiv:2004.06243},
  year={2020}
}

@article{neural20,
  title={Neural dynamical systems},
  author={Mehta,Viraj and Char , Ian and Neiswanger, Willie and Chung, Youngseog and Schneider , Jeff},
  journal={ICLR 2020 Deep Differential Equations Workshop},
  year={2020}
}

@article{cranmer2020lagrangian,
  title={Lagrangian neural networks},
  author={Cranmer, Miles and Greydanus, Sam and Hoyer, Stephan and Battaglia, Peter and Spergel, David and Ho, Shirley},
  journal={ICLR 2020 Deep Differential Equations Workshop},
  year={2020}
}

@inproceedings{chen2018neural,
  title={Neural ordinary differential equations},
  author={Chen, Tian Qi and Rubanova, Yulia and Bettencourt, Jesse and Duvenaud, David K.},
  booktitle={Advances in neural information processing systems (NeurIPS)},
  pages={6571--6583},
  year={2018}
}

@article{long2018hybridnet,
  title={HybridNet: integrating model-based and data-driven learning to predict evolution of dynamical systems},
  author={Long, Yun and She, Xueyuan and Mukhopadhyay, Saibal},
  journal={Conference on Robot Learning (CoRL)},
  year={2018}
}

@article{chebyshev,
author = {Fletcher, James and Moors, Warren},
year = {2014},
month = {04},
pages = {161-231},
title = {Chebyshev Sets},
volume = {98},
journal = {Journal of the Australian Mathematical Society},
doi = {10.1017/S1446788714000561}
}

@article{chebyshev_counter,
title = "A nonconvex set which has the unique nearest point property",
journal = "Journal of Approximation Theory",
volume = "51",
number = "4",
pages = "289 - 332",
year = "1987",
author = "Gordon G Johnson"
}

@inproceedings{long2018pde,
  title={\textsc{PDE}-\textsc{N}et: Learning \textsc{PDE}s from Data},
  author={Long, Zichao and Lu, Yiping and Ma, Xianzhong and Dong, Bin},
  booktitle={International Conference on Machine Learning (ICML)},
  year={2018}
}

@article{ayed2019learning,
  title={Learning dynamical systems from partial observations},
  author={Ayed, Ibrahim and de B{\'e}zenac, Emmanuel and Pajot, Arthur and Brajard, Julien and Gallinari, Patrick},
  journal={arXiv preprint arXiv:1902.11136},
  year={2019}
}

@article{de2017deep,
  title={Deep learning for physical processes: Incorporating prior scientific knowledge},
  author={de B{\'e}zenac, Emmanuel and Pajot, Arthur and Gallinari, Patrick},
  journal={International Conference on Learning Representations (ICLR)},
  year={2018}
}

@article{raissi2017physics,
  title={Physics-informed neural networks: A deep learning framework for solving forward and inverse problems involving nonlinear partial differential equations},
  author={Raissi, Maziar and Perdikaris, Paris and Karniadakis, George Em},
  journal={Journal of Computational Physics},
  volume={473},
  pages={686-707},
  year={2019}
}

@book{constrained_optim,
  author = {Bertsekas, Dimitri P.},
  edition = 1,
  publisher = {Athena Scientific},
  title = {Constrained Optimization and Lagrange Multiplier Methods (Optimization and Neural Computation Series)},
  year = 1996
}

@article{brunton2016discovering,
  title={Discovering governing equations from data by sparse identification of nonlinear dynamical systems},
  author={Brunton, Steven L. and Proctor, Joshua L. and Kutz, J. Nathan},
  journal={Proceedings of the National Academy of Sciences},
  volume={113},
  number={15},
  pages={3932--3937},
  year={2016},
  publisher={National Acad Sciences}
}

@incollection{leguen20phydnet,
title = {Disentangling Physical Dynamics from Unknown Factors for Unsupervised Video Prediction},
author = {Le Guen, Vincent and Thome, Nicolas},
booktitle = {Computer Vision and Pattern Recognition (CVPR)},
year = {2020}
}

@inproceedings{greydanus2019hamiltonian,
  title={Hamiltonian neural networks},
  author={Greydanus, Samuel and Dzamba, Misko and Yosinski, Jason},
  booktitle={Advances in Neural Information Processing Systems (NeurIPS)},
  pages={15353--15363},
  year={2019}
}

@article{toth2019hamiltonian,
  title={Hamiltonian generative networks},
  author={Toth, Peter and Rezende, Danilo Jimenez and Jaegle, Andrew and Racani{\`e}re, S{\'e}bastien and Botev, Aleksandar and Higgins, Irina},
  journal={International Conference on Learning Representations (ICLR)},
  year={2020}
}

@article{chen2019symplectic,
  title={Symplectic Recurrent Neural Networks},
  author={Chen, Zhengdao and Zhang, Jianyu and Arjovsky, Martin and Bottou, L{\'e}on},
  journal={International Conference on Learning Representations (ICLR)},
  year={2020}
}

@inproceedings{wang2018predrnnta,
  title={{PredRNN++}: Towards A Resolution of the Deep-in-Time Dilemma in Spatiotemporal Predictive Learning},
  author={Yunbo Wang and Zhifeng Gao and Mingsheng Long and Jianmin Wang and Philip S. Yu},
  booktitle={International Conference on Machine Learning (ICML)},
  year={2018}
}

@article{klaasen1984fitzhugh,
author = {Klaasen, Gene A. and Troy, William C.},
title = {Stationary Wave Solutions of a System of Reaction-Diffusion Equations Derived from the FitzHugh–Nagumo Equations},
journal = {SIAM Journal on Applied Mathematics},
volume = {44},
number = {1},
pages = {96-110},
year = {1984},
doi = {10.1137/0144008}
}

@article{Reichstein2019,
author = {Reichstein, Markus and Camps-Valls, Gustau and Stevens, Bjorn and Jung, Martin and Denzler, Joachim and Carvalhais, Nuno and Prabhat, {\&}},
pages = {195--204},
title = {{Deep learning and process understanding for data-driven Earth system science}},
journal={Nature},
volume = {566},
year = {2019}
}

@incollection{paszke2019pytorch,
title = {PyTorch: An Imperative Style, High-Performance Deep Learning Library},
author = {Paszke, Adam and Gross, Sam and Massa, Francisco and Lerer, Adam and Bradbury, James and Chanan, Gregory and Killeen, Trevor and Lin, Zeming and Gimelshein, Natalia and Antiga, Luca and Desmaison, Alban and Kopf, Andreas and Yang, Edward and DeVito, Zachary and Raison, Martin and Tejani, Alykhan and Chilamkurthy, Sasank and Steiner, Benoit and Fang, Lu and Bai, Junjie and Chintala, Soumith},
booktitle = {Advances in Neural Information Processing Systems 32},
editor = {H. Wallach and H. Larochelle and A. Beygelzimer and F. d'Alch\'{e}-Buc and E. Fox and R. Garnett},
pages = {8024--8035},
year = {2019}
}

@unknown{Large2004,
author = {Large, William and Yeager, Stephen},
year = {2004},
month = {05},
pages = {},
title = {Diurnal to Decadal Global Forcing for Ocean and Sea-Ice Models: The Data Sets and Flux Climatologies},
doi = {10.5065/D6KK98Q6}
}

\appendix
\newpage

\section{\label{app:chebyshev}Reminder on proximinal and Chebyshev sets}

We begin by giving a definition of proximinal and Chebyshev sets, taken from~\citeasnoun{chebyshev}:
\begin{definition}
A \textit{proximinal set} of a normed space $(E,\|\cdot\|)$ is a subset $\mathcal{C}\subset E$ such that every $x\in E$ admits at least a nearest point in $\mathcal{C}$.  
\end{definition}

\begin{definition}
A \textit{Chebyshev set} of a normed space $(E,\|\cdot\|)$ is a subset $\mathcal{C}\subset E$ such that every $x\in E$ admits a unique nearest point in $\mathcal{C}$.  
\end{definition}

Proximinality reduces to a compacity condition in finite dimensional spaces. In general, it is a weaker one: Boundedly compact sets verify this property for example.

In Euclidean spaces, Chebyshev sets are simply the closed convex subsets. The question of knowing whether it is the case that all Chebyshev sets are closed convex sets in infinite dimensional Hilbert spaces is still an open question. In general, there exists examples of non-convex Chebyshev sets, a famous one being presented in~\citeasnoun{chebyshev_counter} for a non-complete inner-product space.

Given the importance of this topic in approximation theory, finding necessary conditions for a set to be Chebyshev and studying the properties of those sets have been the subject of many efforts. Some of those properties are summarized below:
\begin{itemize}
    \item The metric projection on a boundedly compact Chebyshev set is continuous.
    \item If the norm is strict, every closed convex space, in particular any finite dimensional subspace is Chebyshev.
    \item In a Hilbert space, every closed convex set is Chebyshev.
\end{itemize}

\section{\label{app:proof}Proof of Propositions~\ref{prop:exist_unique} and \ref{prop:unique}}

We prove the following result which implies both propositions in the article:
\begin{prop}
The optimization problem:
\begin{equation}
\label{eq:opt_sup}
\min\limits_{F_p\in\F_p, F_a\in\F} \left\Vert F_a  \right\Vert ~~
\mathrm{subject~to} ~~ \forall X\in\D, \forall t, \frac{\diff X_t}{\diff t} =(F_p+F_a)(X_t)
\end{equation}
is equivalent a metric projection onto $\F_p$.

If $\F_p$ is proximinal, \eqref{eq:opt_sup} admits a minimizing pair.

If $\F_p$ is Chebyshev, \eqref{eq:opt_sup} admits a unique minimizing pair which $F_p$ is the metric projection.
\end{prop}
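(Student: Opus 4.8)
The plan is to show that the feasible set of \eqref{eq:opt_sup} is precisely $\{(F_p, F-F_p) : F_p\in\F_p\}$, so that the objective $\|F_a\|$ equals $\|F-F_p\|$ and the whole problem reduces verbatim to computing a nearest point of $F$ in $\F_p$; the three claims then follow immediately from the definitions of proximinal and Chebyshev sets recalled in Appendix~\ref{app:chebyshev}.

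First I would unpack the constraint. Since $\D$ is defined to contain \emph{every} trajectory of $\nicefrac{\diff X_t}{\diff t}=F(X_t)$, it contains in particular one trajectory issued from every admissible state, so the set of visited points $\{X_t : X_\cdot\in\D,\ t\in[0,T]\}$ exhausts $\A$. Consequently, the pointwise identities $(F_p+F_a)(X_t)=\nicefrac{\diff X_t}{\diff t}=F(X_t)$ required for all $X_\cdot\in\D$ and all $t$ are equivalent to the single functional identity $F_p+F_a=F$ in $\F$; when $\F=L^2(\A)$ this is an almost-everywhere identity, which is exactly what is needed because the trajectory images cover $\A$ and the $L^2$ norm used in \eqref{eq:opt_sup} is the one on $\A$ (so the constraint and the objective are both evaluated consistently on the data).

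Next, rewriting the constraint as $F_a=F-F_p$, the feasible pairs are exactly $(F_p, F-F_p)$ with $F_p\in\F_p$; this set is nonempty whenever $\F_p$ is, since $\F$ is a vector space containing $\F_p$, hence $F-F_p\in\F$ for any choice of $F_p$. Therefore \eqref{eq:opt_sup} is the problem $\min_{F_p\in\F_p}\|F-F_p\|$, i.e.\ the metric projection of $F$ onto $\F_p$, which establishes the first assertion. From here: if $\F_p$ is proximinal then $F$ has at least one nearest point $F_p^\star\in\F_p$, and $(F_p^\star, F-F_p^\star)$ is a minimizing pair; if moreover $\F_p$ is Chebyshev then $F_p^\star$ is unique, so the minimizing pair is unique and its physical component is, by definition, the metric projection of $F$ onto $\F_p$.

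The only genuinely delicate step is the first one: making rigorous the claim that the family of trajectory constraints collapses to the single identity $F_p+F_a=F$ (and, in the $L^2$ case, handling the a.e.\ versus pointwise distinction together with the coverage of $\A$ by trajectory images). Once that reduction is in place, the remainder is a direct transcription of the proximinal/Chebyshev definitions, and no further analysis is required.
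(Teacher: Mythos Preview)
Your proposal is correct and follows essentially the same route as the paper: reduce the trajectory constraint to the functional identity $F_p+F_a=F$ on $\A$ (the paper does this by explicitly reconstructing a function $F^\D$ from the trajectory derivatives, whereas you invoke $F$ directly, but these coincide on $\A$ by definition), then read off the metric-projection problem and apply the proximinal/Chebyshev definitions. The only cosmetic difference is that the paper justifies coverage of $\A$ by noting $\A$ is \emph{defined} as the set of points reached by trajectories in $\D$, rather than by arguing that $\D$ contains a trajectory through every state; you may want to phrase it that way to avoid any ambiguity about whether $\D$ is literally all solutions.
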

\begin{proof}

The idea is to reconstruct the full functional from the trajectories of $\D$. By definition, $\A$ is the set of points reached by trajectories in $\D$ so that:
\[
\A = \{x\in\R^d\ |\ \exists X_\cdot\in\D, \exists t,\ X_t = x\}
\]
Then let us define a function $F^\D$ in the following way: For $a\in \A$, we can find $X_\cdot\in\D$ and $t_0$ such that $X_{t_0} = a$. Differentiating $X$ at $t_0$, which is possible by definition of $\D$, we take:
\[
F^\D(a) = \left.\frac{\diff X_t}{\diff t}\right|_{t=t_0}
\]

For any $(F_p,F_a)$ satisfying the constraint in \eqref{eq:opt_sup}, we then have that $(F_p+F_a)(a) = \nicefrac{\diff X_t}{\diff t}_{|t_0} = F^\D(a)$ for all $a\in\A$. Conversely, any pair such that $(F_p, F_a)\in\F_p\times\F$ and $F_p+F_a = F^\D$, verifies the constraint.

Thus we have the equivalence between \eqref{eq:opt_sup} and the metric projection formulated as:
\begin{equation}
    \min\limits_{F_p\in\F_p} \left\Vert  F^\D - F_p  \right\Vert
\end{equation}

If $\F_p$ is proximinal, the projection problem admits a solution which we denote $F^\star_p$. Taking $F^\star_a = F^\D - F^\star_p$, we have that $F^\star_p+F^\star_a = F^\D$ so that $(F^\star_p, F^\star_a)$ verifies the constraint of \eqref{eq:opt}. Moreover, if there is $(F_p,F_a)$ satisfying the constraint of \eqref{eq:opt}, we have that $F_p + F_a = F^\D$ by what was shown above and $\|F_a\| = \|F^\D-F_p\|\geq\|F^\D-F^\star_p\|$ by definition of $F^\star_p$. This shows that $(F^\star_p,F^\star_a)$ is minimal.

Moreover, if $\F_p$ is a Chebyshev set, by uniqueness of the projection, if $F_p\not=F^\star_p$ then $\|F_a\|>\|F^\star_a\|$. Thus the minimal pair is unique.

\end{proof}

\section{\label{app:alt_methods}Parameter estimation in incomplete physical models}

Classically, when a set $\F_p\subset\F$ summarizing the most important properties of a system is available, this gives a \textit{simplified model} of the true dynamics and the adopted problem is then to fit the trajectories using this model as well as possible, solving:
\begin{eqnarray}
    \min\limits_{F_p\in\F_p}{\mathbb{E}_{X\sim\D}  L(\widetilde{X}^{X_0},X)}\nonumber \\
    \textrm{ subject to } ~~
    \forall g\in\I,\ \widetilde{X}_0^g = g\textrm{ and }\forall t,\ \frac{\diff \widetilde{X}_t^g}{\diff t} = F_p(\widetilde{X}_t^g)
\label{eq:opt_pure_phy}
\end{eqnarray}
where $L$ is a discrepancy measure between trajectories. Recall that $\widetilde{X}^{X_0}$ is the result trajectory of an ODE solver taking $X_0$ as initial condition. In other words, we try to find a function $F_p$ which gives trajectories as close as possible to the ones from the dataset. While estimation of the function becomes easier, there is then a residual part which is left unexplained and this can be a non negligible issue in at least two ways:
\begin{itemize}
    \item When $F\not\in\F_p$, the loss is strictly positive at the minimum. This means that reducing the space of functions $\mathcal{F}_p$ makes us lose in terms of accuracy.\footnote{This is true in theory, although not necessarily in practice when $F$ overfits a small dataset.}
    \item The obtained function $F_p$ might not even be the most meaningful function from $\F_p$ as it would try to capture phenomena which are not explainable with functions in $\F_p$, thus giving the wrong bias to the calculated function. For example, if one is considering a dampened periodic trajectory where only the period can be learned in $\F_p$ but not the dampening, the estimated period will account for the dampening and will thus be biased.
\end{itemize}

This is confirmed in the paper in Section \ref{sec:expes}: the incomplete physical models augmented with APHYNITY get different and experimentally better physical identification results than the physical models alone.

Let us compare our approach with this one on the linearized damped pendulum to show how estimates of physical parameters can differ. The equation is the following:
\[
\frac{\diff^2\theta}{\diff t^2} + \omega_0^2\theta + \alpha \frac{\diff \theta}{\diff t} = 0
\]
We take the same notations as in the article and parametrize the simplified physical models as:
\[
F^{a}_p:X\mapsto (\frac{\diff \theta}{\diff t}, -a\theta)
\]
where $a>0$ corresponds to $\omega_0^2$. The corresponding solution for an initial state $X_0$, which we denote $X^{a}$, can then written explicitly as:
\[
\theta^{a}_t = \theta_0\cos{\sqrt{a} t}
\]
Let us consider damped pendulum solutions $X$ written as:
\[
\theta_t = \theta_0e^{-t}\cos{t}
\]
which corresponds to:
\[
F : X\mapsto (\frac{\diff \theta}{\diff t}, -2(\theta+\frac{\diff \theta}{\diff t}))
\]
It is then easy to see that the estimate of $a$ with the physical model alone can be obtained by minimizing:
\[
\int_0^T|e^{-t}\cos{t} - \cos{\sqrt{a} t}|^2
\]
This expression depends on $T$ and thus, depending on the chosen time interval and the way the integral is discretized will almost always give biased estimates. In other words, the estimated value of $a$ will not give us the desired solution $t\mapsto \cos{t}$.

On the other hand, for a given $a$, in the APHYNITY framework, the residual must be equal to:
\[
F^a_r : X\mapsto (0, (a-2)\theta - 2\frac{\diff \theta}{\diff t})
\]
in order to satisfy the fitting constraint. Here $a$ corresponds to $1+\omega_0^2$ not to $\omega_0^2$ as in the simplified case. Minimizing its norm, we obtain $a=2$ which gives us the desired solution:
\[
\theta_t = \theta_0e^{-t}\cos{t}
\]
with the right period.

\section{\label{app:der_superv}Discussion on supervision over derivatives}

In order to find the appropriate decomposition $(F_p,F_a)$, we use a trajectory-based error by solving:
\begin{eqnarray}
    \min\limits_{F_p\in\F_p, F_a\in\F} \left\Vert  F_a  \right\Vert \nonumber \\ \textrm{ subject to } ~~
    &\forall g\in\I,\ \widetilde{X}_0^g = g\textrm{ and }\forall t,\ \frac{\diff \widetilde{X}_t^g}{\diff t} = (F_p+F_a)(\widetilde{X}_t^g) \nonumber\\
    &\forall X\in\D,\ L(X,\widetilde{X}^{X_0}) = 0
\label{notre_pbm_int}
\end{eqnarray}

In the continuous setting where the data is available at all times $t$, this problem is in fact equivalent to the following one:
\begin{eqnarray}
    \min\limits_{F_p\in\F_p} \mathbb{E}_{X\sim\D}  \int \left\Vert \frac{\diff X_t}{\diff t} - F_p(X_t)  \right\Vert
\label{der_pbm}
\end{eqnarray}
where the supervision is done directly over derivatives, obtained through finite-difference schemes. This echoes the proof in \ref{app:proof} where $F$ can be reconstructed from the continuous data.

However, in practice, data is only available at discrete times with a certain time resolution. While \eqref{der_pbm} is indeed equivalent to \eqref{notre_pbm_int} in the continuous setting, in the practical discrete one, the way error propagates is not anymore: For \eqref{notre_pbm_int} it is controlled over integrated trajectories while for \eqref{der_pbm} the supervision is over the approximate derivatives of the trajectories from the dataset. We argue that the trajectory-based approach is more flexible and more robust for the following reasons:
\begin{itemize}
    \item In \eqref{notre_pbm_int}, if $F_a$ is appropriately parameterized, it is possible to perfectly fit the data trajectories at the sampled points.
    \item The use of finite differences schemes to estimate $F$ as is done in \eqref{der_pbm} necessarily induces a non-zero discretization error.
    \item This discretization error is explosive in terms of divergence from the true trajectories.
\end{itemize}

This last point is quite important, especially when time sampling is sparse~(even though we do observe this adverse effect empirically in our experiments with relatively finely time-sampled trajectories). The following gives a heuristical reasoning as to why this is the case. Let $\widetilde{F} = F + \epsilon$ be the function estimated from the sampled points with an error $\epsilon$ such that $\|\epsilon\|_\infty\leq\alpha$. Denoting $\widetilde{X}$ the corresponding trajectory generated by $\widetilde{F}$, we then have, for all $X\in\D$:
\[
\forall t,\ \frac{\diff (X-\widetilde{X})_t}{\diff t} = F(X_t) - F(\widetilde{X}_t) - \epsilon(\widetilde{X}_t)
\]
Integrating over $[0,T]$ and using the triangular inequality as well as the mean value inequality, supposing that $F$ has uniformly bounded spatial derivatives:
\[
\forall t\in[0,T],\ \|(X-\widetilde{X})_t\| \leq \|\nabla F\|_\infty\int_0^t \|X_s-\widetilde{X}_s\| + \alpha t
\]
which, using a variant of the Grönwall lemma, gives us the inequality:
\[
\forall t\in[0,T],\ \|X_t-\widetilde{X}_t\| \leq  \frac{\alpha}{\|\nabla F\|_\infty}(\exp(\|\nabla F\|_\infty t) -1)
\]

When $\alpha$ tends to $0$, we recover the true trajectories $X$. However, as $\alpha$ is bounded away from $0$ by the available temporal resolution, this inequality gives a rough estimate of the way $\widetilde{X}$ diverges from them, and it can be an equality in many cases. This exponential behaviour explains our choice of a trajectory-based optimization.

\section{Implementation details\label{app:implementation}}

We describe here the three use cases studied in the paper for validating APHYNITY. All experiments are implemented with PyTorch \cite{paszke2019pytorch} and the differentiable ODE solvers with the adjoint method implemented in \texttt{torchdiffeq}.\footnote{\url{https://github.com/rtqichen/torchdiffeq}}

\subsection{Reaction-diffusion equations}

The system is driven by a FitzHugh-Nagumo type PDE~\cite{klaasen1984fitzhugh}
\[
        \frac{\partial u}{\partial t} = a\Delta u + R_u(u,v; k),
    \frac{\partial v}{\partial t} = b\Delta v + R_v(u,v)
\]
where $a$ and $b$ are respectively the diffusion coefficients of $u$ and $v$, $\Delta$ is the Laplace operator. The local reaction terms are $R_u(u,v; k) = u - u^3 - k - v, R_v(u,v) = u - v$. 

The state $X=(u,v)$ is defined over a compact rectangular domain $\Omega = [-1,1]^2$ with periodic boundary conditions. $\Omega$ is spatially discretized with a $32\times 32$ 2D uniform square mesh grid. The periodic boundary condition is implemented with circular padding around the borders. $\Delta$ is systematically estimated with a $3\times 3$ discrete Laplace operator. 

\paragraph{Dataset} Starting from a randomly sampled initial state $X_\textrm{init} \in [0,1]^{2\times 32\times 32}$, we generate states by integrating the true PDE with fixed $a$, $b$, and $k$ in a dataset ($a=1 \times 10^{-3}, b=5 \times 10^{-3},  k=5 \times 10^{-3}$). We firstly simulate high time-resolution ($\delta t_\textrm{sim} = 0.001$) sequences with explicit finite difference method. We then extract states every $\delta t_\textrm{data} = 0.1$ to construct our low time-resolution datasets.

We set the time of random initial state to $t=-0.5$ and the time horizon to $t=2.5$. 1920 sequences are generated, with 1600 for training/validation and 320 for test. We take the state at $t=0$ as $X_0$ and predict the sequence until the horizon (equivalent to 25 time steps) in all reaction-diffusion experiments. Note that the sub-sequence with $t<0$ are reserved for the extensive experiments in \ref{app:additional_reac_diff}.

\paragraph{Neural network architectures}

Our $F_a$ here is a 3-layer convolution network (ConvNet). The two input channels are $(u, v)$ and two output ones are $(\frac{\partial u}{\partial t}, \frac{\partial v}{\partial t})$. The purely data-driven Neural ODE uses such ConvNet as its $F$. The detailed architecture is provided in Table~\ref{tab:reaction-diffusion-arch}. The estimated physical parameters $\theta_p$ in $F_p$ are simply a trainable vector $(a, b) \in \mathbb{R}_+^2$ or $(a, b, k) \in \mathbb{R}_+^3$.

\begin{table}[H]
    \caption{ConvNet architecture in reaction-diffusion and wave equation experiments, used as data-driven derivative operator in APHYNITY and Neural ODE \protect\cite{chen2018neural}.}
    \label{tab:reaction-diffusion-arch}
    \centering
\resizebox{\textwidth}{!}{
    \begin{tabular}{ll}
    \toprule
        Module & Specification \\
    \midrule
        2D Conv. & $3\times 3$ kernel, 2 input channels, 16 output channels, 1 pixel zero padding \\
        2D Batch Norm. & No average tracking \\
        ReLU activation & --- \\
        2D Conv. & $3\times 3$ kernel, 16 input channels, 16 output channels, 1 pixel zero padding \\
        2D Batch Norm. & No average tracking \\
        ReLU activation & --- \\
        2D Conv. & $3\times 3$ kernel, 16 input channels, 2 output channels, 1 pixel zero padding \\
    \bottomrule
    \end{tabular}
}
\end{table}

\paragraph{Optimization hyperparameters}
We choose to apply the same hyperparameters for all the reaction-diffusion experiments: $Niter = 1, \lambda_0 = 1, \tau_1 = 1\times 10^{-3}, \tau_2 = 1\times 10^{3}$.  

\subsection{Wave equations}  \label{sup:wave-details}
The damped wave equation is defined by
\[
    \frac{\partial^2 w}{\partial t^2} - c^2 \Delta w + k \frac{\partial w}{\partial t} = 0
\]
where $c$ is the wave speed and $k$ is the damping coefficient. The state is $X=(w, \frac{\partial w}{\partial t})$. 

We consider a compact spatial domain $\Omega$ represented as a $64\times64$ grid and discretize the Laplacian operator similarly. $\Delta$ is implemented using a $5\times 5$ discrete Laplace operator in simulation whereas in the experiment is a $3\times 3$ Laplace operator. Null Neumann boundary condition are imposed for generation.

\paragraph{Dataset} $\delta t$ was set to $0.001$ to respect Courant number and provide stable integration. The simulation was integrated using a 4th order finite difference Runge-Kutta scheme for 300 steps from an initial Gaussian state, i.e for all sequence at $t=0$, we have:
\begin{equation}
    w(x,y, t=0) = C \times \exp^{\frac{(x-x_0)^2 + (y-y_0)^2}{\sigma^2}}
\end{equation}
The amplitude $C$ is fixed to $1$, and $(x_0, y_0)=(32,32)$ to make the Gaussian curve centered for all sequences. However, $\sigma$ is different for each sequence and uniformly sampled in $[10, 100]$.
The same $\delta t$ was used for train and test. All initial conditions are Gaussian with varying amplitudes. 250 sequences are generated, 200 are used for training while 50 are reserved as a test set.
In the main paper setting, $c=330$ and $k=50$. As with the reaction diffusion case, the algorithm takes as input a state $X_{t_0}=(w, \frac{\diff w}{\diff t})(t_0)$ and predicts all states from $t_0+\delta t$ up to $t_0 + 25 \delta t$.

\paragraph{Neural network architectures}
The neural network for $F_a$ is a 3-layer convolution neural network with the same architecture as in Table~\ref{tab:reaction-diffusion-arch}. For $F_p$, the parameter(s) to be estimated is either a scalar $c\in\mathbb{R}_+$ or a vector $(c,k) \in \mathbb{R}_+^2$. Similarly, Neural ODE networks are build as presented in Table~\ref{tab:reaction-diffusion-arch}.

\paragraph{Optimization hyperparameters}
We use the same hyperparameters for the experiments: ${Niter = 3, \lambda_0 = 1, \tau_1 = 1\times 10^{-4}, \tau_2 = 1\times 10^{2}}$.

\subsection{Damped pendulum} 

We consider the non-linear damped pendulum problem, governed by the ODE \[\frac{\diff ^2 \theta}{\diff t^2} + \omega_0^2 \sin \theta + \alpha \frac{\diff \theta}{\diff t} = 0\] where $\theta(t)$ is the angle, $\omega_0 = \frac{2 \pi}{T_0}$ is the proper pulsation~($T_0$ being the period) and $\alpha$ is the damping coefficient. With the state $X = (\theta, \frac{\diff\theta}{\diff t})$, the ODE can be written as $\frac{\diff X_t}{\diff t} = F(X_t)$ with
$ F : X \mapsto ( \frac{\diff\theta}{\diff t} ,  - \omega_0^2 \sin \theta  - \alpha \frac{\diff\theta}{\diff t})$.

\paragraph{Dataset} For each train / validation / test split, we simulate a dataset with 25 trajectories of 40 timesteps (time interval $[0,20]$, timestep $\delta t=0.5$) with fixed ODE coefficients $(T_0 = 12, \alpha=0.2)$ and varying initial conditions. The simulation integrator is Dormand-Prince Runge-Kutta method of order (4)5 (DOPRI5, \citeasnoun{dormand1980family}). We also add a small amount of white gaussian noise ($\sigma=0.01$) to the state. Note that our pendulum dataset is much more challenging than the ideal frictionless pendulum considered in \cite{greydanus2019hamiltonian}. 

\paragraph{Neural network architectures} We detail in Table \ref{tab:pendulum-nn-archis} the neural architectures used for the damped pendulum experiments. All data-driven augmentations for approximating the mapping $X_t \mapsto F(X_t)$ are implemented by multi-layer perceptrons (MLP) with 3 layers of 200 neurons and ReLU activation functions (except at the last layer: linear activation). The Hamiltonian \cite{greydanus2019hamiltonian,toth2019hamiltonian} is implemented by a MLP that takes the state $X_t$ and outputs a scalar estimation of the Hamiltonian $\mathcal{H}$ of the system: the derivative is then computed by an in-graph gradient of $\mathcal{H}$ with respect to the input: $F(X_t) = \left( \frac{\partial \mathcal{H}}{\partial (\diff\theta/\diff t)}, - \frac{\partial \mathcal{H}}{\diff  \theta} \right)$. 

\begin{table}[H]
    \caption{Neural network architectures for the damped pendulum experiments. n/a corresponds to non-applicable cases.}
    \centering
\resizebox{\textwidth}{!}{
   \begin{tabular}{lcc}
   \toprule
    Method     & Physical model & Data-driven model  \\
    \midrule
    Neural ODE    &  n/a & MLP(in=2, units=200, layers=3, out=2)  \\
    \midrule
    Hamiltonian  &  MLP(in=2, units=200, layers=3, out=1) & n/a \\
    APHYNITY Hamiltonian & MLP(in=2, units=200, layers=3, out=1)  & MLP(in=2, units=200, layers=3, out=2)  \\
    \midrule
     Param ODE ($\omega_0$) & 1 trainable parameter $\omega_0$ &  n/a \\   
    APHYNITY Param ODE ($\omega_0$) & 1 trainable parameter $\omega_0$ &   MLP(in=2, units=200, layers=3, out=2) \\
    \midrule
     Param ODE ($\omega_0,\alpha$) & 2 trainable parameters $\omega_0, \lambda$ &  n/a \\   
    APHYNITY Param ODE ($\omega_0,\alpha$) & 2 trainable parameters $\omega_0, \lambda$ &   MLP(in=2, units=200, layers=3, out=2) \\    
    \bottomrule
    \end{tabular}
}
   \label{tab:pendulum-nn-archis}
\end{table}

\paragraph{Optimization hyperparameters} The hyperparameters of the APHYNITY optimization algorithm ($Niter,\lambda_0,\tau_1,\tau_2$) were cross-validated on the validation set and are shown in Table \ref{tab:pendulum-hyperparameters}. All models were trained with a maximum number of 5000 steps with early stopping.

\begin{table}[H]
    \caption{Hyperparameters of the damped pendulum experiments.}
    \centering
\begin{adjustbox}{max width=\linewidth}
   \begin{tabular}{ccccc}
   \toprule
    Method     & Niter & $\lambda_0$ & $\tau_1$ & $\tau_2$   \\
    \midrule
    APHYNITY Hamiltonian &  5 & 1 & 1 & 0.1  \\
    APHYNITY ParamODE ($\omega_0$) &  5 & 1 & 1 & 10  \\
    APHYNITY ParamODE ($\omega_0,\lambda$) & 5 & 1000 & 1 & 100  \\    
    \bottomrule
    \end{tabular}
    \end{adjustbox}
   \label{tab:pendulum-hyperparameters}
\end{table}

\section{Ablation study\label{app:ablation}}

We conduct ablation studies to show the effectiveness of APHYNITY's adaptive optimization and trajectory-based learning scheme. 

\subsection{Ablation to vanilla MB/ML cooperation}

In Table~\ref{tab:ablation-nds}, we consider the ablation case with the vanilla augmentation scheme found in \cite{leguen20phydnet,wang2019integrating,neural20}, which does not present any proper decomposition guarantee. We observe that the APHYNITY cooperation scheme outperforms this vanilla scheme in all case, both in terms of forecasting performances (\eg log MSE= -0.35 vs. -3.97 for the Hamiltonian in the pendulum case) and parameter identification (\eg Err Param=8.4\% vs. 2.3 for Param PDE ($a,b$ for reaction-diffusion). It confirms the crucial benefits of APHYNITY's principled decomposition scheme.

\subsection{Detailed ablation study}

We conduct also two other ablations in Table \ref{tab:ablation-others}: 
\begin{itemize}
    \item \textit{derivative supervision}: in which $F_p+F_a$ is trained with supervision over approximated derivatives on ground truth trajectory, as performed in \citeasnoun{greydanus2019hamiltonian} and \citeasnoun{cranmer2020lagrangian}. More precisely, APHYNITY's $\mathcal{L}_\textrm{traj}$ is here replaced with $\mathcal{L}_\textrm{deriv} = \|\frac{\diff X_t}{\diff t} - F(X_t)\|$ as in \eqref{der_pbm}, where $\frac{\diff X_t}{\diff t}$ is approximated by finite differences on $X_t$.
    \item \textit{non-adaptive optim.}: in which we train APHYNITY by minimizing $\|F_a\|$ without the adaptive optimization of $\lambda$ shown in Algorithm~\ref{alg:optim}. This case is equivalent to $\lambda = 1, \tau_2=0$.
\end{itemize} 

\begin{table}[t]
    \centering
    \caption{Ablation study comparing APHYNITY to the vanilla augmentation scheme \protect\cite{wang2019integrating,neural20} for the reaction-diffusion equation, wave equation and damped pendulum.
    \label{tab:ablation-nds}}
\resizebox{\textwidth}{!}{
    \begin{tabular}{clccc}
    \toprule
    Dataset & Method & $\log$ MSE &
    \%Err Param. & 
    $\|F_a\|^2$ 
    \\ \midrule
\multirowcell{6}{\parbox{0.7cm}{\centering\tiny Reaction-diffusion}} 
& Param. PDE ($a,b$) with vanilla aug. & -4.56$\pm$0.52 
  & 8.4 &   (7.5$\pm$1.4)e1\\ 
&  APHYNITY Param. PDE ($a,b$) & \textbf{-5.10$\pm$0.21} & \textbf{2.3}  &   (6.7$\pm$0.4)e1 \\
  \cmidrule{2-5}
& Param. PDE ($a,b,k$) with vanilla aug. & -8.04$\pm$0.03 
  & 25.4  & (1.5$\pm$0.2)e-2\\
& APHYNITY Param. PDE ($a,b,k$) & \textbf{-9.35$\pm$0.02} 
  & \textbf{0.096}  & (1.5$\pm$0.4)e-6 \\ 
  \cmidrule{2-5}
& True PDE with vanilla aug. & -8.12$\pm$0.05 
  & n/a  & (6.1$\pm$2.3)e-4\\
& APHYNITY True PDE & \textbf{-9.17$\pm$0.02} 
 & n/a &  (1.4$\pm$0.8)e-7\\
 
  \midrule
\multirowcell{4}{\parbox{1cm}{\centering\tiny Wave equation}} 
& Param PDE ($c$) with vanilla aug. & -3.90 $\pm$  0.27 & 0.51 &   88.66 \\
&  APHYNITY Param PDE ($c$) & \textbf{-4.64$\pm$0.25}& \textbf{0.31} & 71.0 \\
  \cmidrule{2-5}
& Param PDE ($c, k$) with vanilla aug. & -5.96 $\pm$ 0.10 & 0.71  & 25.1 \\
&  APHYNITY Param PDE ($c, k$) & \textbf{-6.09$\pm$0.28} & \textbf{0.70}  & 4.54 \\ 

  \midrule 
\multirowcell{8}{\parbox{1cm}{\centering\tiny Damped pendulum}} 
& Hamiltonian with vanilla aug. & -0.35$\pm$0.1  & n/a  & 837$\pm$117 \\
&  APHYNITY Hamiltonian   & \textbf{-3.97$\pm$1.2}  & n/a &  623$\pm$68 \\ \cmidrule{2-5}
& Param ODE ($\omega_0$) with vanilla aug. & -7.02$\pm$1.7 & 4.5   &  148$\pm$49 \\
&  APHYNITY Param ODE ($\omega_0$)  &  \textbf{-7.86$\pm$0.6}  & \textbf{4.0}  &   132$\pm$11  \\ \cmidrule{2-5}
& Param ODE ($\omega_0, \alpha$) with vanilla aug.& -7.60$\pm$0.6  &    4.65     & 35.5$\pm$6.2 \\
& APHYNITY Param ODE ($\omega_0, \alpha$)  & \textbf{-8.31$\pm$0.3}  &  \textbf{0.39}      & 8.5$\pm$2.0   \\
 \cmidrule{2-5}
& Augmented True ODE with vanilla aug. & \textbf{-8.40$\pm$0.2}  & n/a  & 3.4$\pm$0.8  \\
& APHYNITY True ODE   &    \textbf{-8.44$\pm$0.2}  & n/a   & 2.3$\pm$0.4  \\   
 \bottomrule
    \end{tabular}
}
\end{table}

\begin{table}[h!]
    \centering
    \caption{Detailed ablation study on supervision and optimization for the reaction-diffusion equation, wave equation and damped pendulum.
    \label{tab:ablation-others}}
\begin{adjustbox}{max width=\linewidth}
    \begin{tabular}{clccc}
    \toprule
    Dataset & Method & $\log$ MSE &
    \%Err Param. & 
    $\|F_a\|^2$ 
    \\ \midrule
\multirowcell{9}{\parbox{0.7cm}{\centering\tiny Reaction-diffusion}} & Augmented Param. PDE ($a,b$) derivative supervision  & -4.42$\pm$0.25 

  & 12.6 & (6.8$\pm$0.6)e1\\ 
& Augmented Param. PDE ($a,b$) non-adaptive optim. & -4.55$\pm$0.11 

  & 7.5 &  (7.6$\pm$1.0)e1\\ 
&  APHYNITY Param. PDE ($a,b$) & \textbf{-5.10$\pm$0.21} & \textbf{2.3}  &   (6.7$\pm$0.4)e1 \\
  \cmidrule{2-5}
& Augmented Param. PDE ($a,b,k$) derivative supervision & -4.90$\pm$0.06 

  & 11.7 & (1.9$\pm$0.3)e-1\\
& Augmented Param. PDE ($a,b,k$) non-adaptive optim. & -9.10$\pm$0.02 

  & 0.21  & (5.5$\pm$2.9)e-7\\
&  APHYNITY Param. PDE ($a,b,k$) & \textbf{-9.35$\pm$0.02} 

  & \textbf{0.096}  & (1.5$\pm$0.4)e-6 \\ 
  \cmidrule{2-5}
& Augmented True PDE derivative supervision & -6.03$\pm$0.01 

  & n/a & (3.1$\pm$0.8)e-3\\
& Augmented True PDE non-adaptive optim. & -9.01$\pm$0.01 

  & n/a &  (1.5$\pm$0.8)e-6\\
& APHYNITY True PDE & \textbf{-9.17$\pm$0.02} 

 & n/a &  (1.4$\pm$0.8)e-7\\
  \midrule
\multirowcell{8}{\parbox{1cm}{\centering\tiny Wave equation}} & Augmented Param PDE ($c$) derivative supervision & -1.16$\pm$0.48 & 12.1 &  0.00024 \\
& Augmented Param PDE ($c$) non-adaptive optim. &-2.57$\pm$0.21 & 3.1 &  43.6 \\ 
&  APHYNITY Param PDE ($c$) & \textbf{-4.64$\pm$0.25}& \textbf{0.31} & 71.0 \\
  \cmidrule{2-5}
& Augmented Param PDE ($c, k$) derivative supervision & -4.19$\pm$0.36 &  7.2   & 0.00012 \\
& Augmented  Param PDE ($c, k$) non-adaptive optim. & -4.93$\pm$0.51 & 1.32 & 0.054 \\
&  APHYNITY Param PDE ($c, k$) & \textbf{-6.09$\pm$0.28} & \textbf{0.70}  & 4.54 \\ 
  \cmidrule{2-5}
& Augmented True PDE derivative supervision & -4.42 $\pm$ 0.33 & n/a & 6.02e-5 \\
& Augmented True PDE non-adaptive optim. & -4.97$\pm$0.49 & n/a & 0.23 \\
& APHYNITY True PDE & \textbf{-5.24$\pm$0.45} & n/a  & 0.14 \\
  \midrule 
\multirowcell{12}{\parbox{1cm}{\centering\tiny Damped pendulum}} &   Augmented Hamiltonian derivative supervision & -0.83$\pm$0.3   & n/a  & 642$\pm$121 \\
&    Augmented Hamiltonian non-adaptive optim. & -0.49$\pm$0.58  & n/a &  165$\pm$30 \\    
&    APHYNITY Hamiltonian   & \textbf{-3.97$\pm$1.2}  & n/a &  623$\pm$68 \\ \cmidrule{2-5}
&  Augmented Param ODE ($\omega_0$) derivative supervision &  -1.02$\pm$0.04 & 5.8  &  136$\pm$13 \\
&  Augmented Param ODE ($\omega_0$) non-adaptive optim. & -4.30$\pm$1.3  &  4.4   &  90.4$\pm$27 \\  
&  APHYNITY Param ODE ($\omega_0$)  &  \textbf{-7.86$\pm$0.6}  & \textbf{4.0}  &   132$\pm$11  \\ \cmidrule{2-5}
& Augmented Param ODE ($\omega_0, \alpha$) derivative supervision  & -2.61$\pm$0.2  &  5.0  & 3.2$\pm$1.7 \\
&  Augmented Param ODE ($\omega_0, \alpha$) non-adaptive optim. & -7.69$\pm$1.3  &   1.65  & 4.8$\pm$7.7 \\
& APHYNITY Param ODE ($\omega_0, \alpha$)  & \textbf{-8.31$\pm$0.3}  &  \textbf{0.39}      & 8.5$\pm$2.0   \\
 \cmidrule{2-5}
& Augmented True ODE derivative supervision  & -2.14$\pm$0.3 & n/a  & 4.1$\pm$0.6 \\
& Augmented True ODE non-adaptive optim. & \textbf{-8.34$\pm$0.4}  & n/a  &  1.4$\pm$0.3  \\
& APHYNITY True ODE   &    \textbf{-8.44$\pm$0.2}  & n/a   & 2.3$\pm$0.4  \\   
 \bottomrule
    \end{tabular}
\end{adjustbox}
\end{table}

We highlight the importance to use a principled adaptive optimization algorithm (APHYNITY algorithm described in paper) compared to a non-adpative optimization: for example in the reaction-diffusion case, log MSE= -4.55 vs. -5.10 for Param PDE $(a,b)$. Finally, when the supervision occurs on the derivative, both forecasting and parameter identification results are systematically lower than with APHYNITY's trajectory based approach: for example, log MSE=-1.16 vs. -4.64 for Param PDE $(c)$ in the wave equation. It confirms the good properties of the APHYNITY training scheme.

\newpage
\section{Additional experiments\label{app:additional}}

\subsection{Reaction-diffusion systems with varying diffusion parameters \label{app:additional_reac_diff}}

We conduct an extensive evaluation on a setting with varying diffusion parameters for reaction-diffusion equations. 
The only varying parameters are diffusion coefficients, \ie individual $a$ and $b$ for each sequence. We randomly sample $a\in [1\times 10^{-3},2\times 10^{-3}]$ and $b \in [3\times 10^{-3},7\times 10^{-3}]$. $k$ is still fixed to $5\times 10^{-3}$ across the dataset.

In order to estimate $a$ and $b$ for each sequence, we use here a ConvNet encoder $E$ to estimate parameters from 5 reserved frames ($t<0$). The architecture of the encoder $E$ is similar to the one in Table~\ref{tab:reaction-diffusion-arch} except that $E$ takes 5 frames (10 channels) as input and $E$ outputs a vector of estimated $(\tilde a,\tilde b)$ after applying a sigmoid activation scaled by $1\times 10^{-2}$ (to avoid possible divergence). For the baseline Neural ODE, we concatenate $a$ and $b$ to each sequence as two channels.

In Table~\ref{tab:reaction-diffusion-supplement}, we observe that combining data-driven and physical components outperforms the pure data-driven one. When applying APHYNITY to Param PDE ($a,b$), the prediction precision is significantly improved ($\log$ MSE: -1.32 vs. -4.32) with $a$ and $b$ respectively reduced from 55.6\% and 54.1\% to 11.8\% and 18.7\%. For complete physics cases, the parameter estimations are also improved for Param PDE ($a,b,k$) by reducing over 60\% of the error of $b$ (3.10 vs. 1.23) and 10\% to 20\% of the errors of $a$ and $k$ (resp. 1.55/0.59 vs. 1.29/0.39). 

The extensive results reflect the same conclusion as shown in the main article: APHYNITY improves the prediction precision and parameter estimation. The same decreasing tendency of $\|F_a\|$ is also confirmed.

\begin{table}[h]
\setlength{\tabcolsep}{4pt}
    \centering
    \caption{Results of the dataset of reaction-diffusion with varying $(a,b)$. $k=5\times 10^{-3}$ is shared across the dataset. \label{tab:reaction-diffusion-supplement}}
\resizebox{\textwidth}{!}{
    \begin{tabular}{clccccc}
    \toprule
    & Method & $\log$ MSE 
    & \%Err $a$ & \%Err $b$ & \%Err $k$  & $\|F_a\|^2$ \\
\midrule
\parbox{0.9cm}{\centering\tiny Data-driven}  & Neural ODE \cite{chen2018neural} & -3.61$\pm$0.07 
  & n/a & n/a & n/a & n/a\\
    \midrule
\multirowcell{2}{\parbox{0.9cm}{\centering\tiny Incomplete physics}}   & Param PDE ($a,b$) & -1.32$\pm$0.02 
  & 55.6 & 54.1 &  n/a & n/a\\
  & APHYNITY Param PDE ($a,b$) & \textbf{-4.32$\pm$0.32} 
  & \textbf{11.8} & \textbf{18.7} & n/a &  (4.3$\pm$0.6)e1\\
  \midrule
\multirowcell{4}{\parbox{0.9cm}{\centering\tiny Complete physics}} & Param PDE ($a,b,k$) & \textbf{-5.54$\pm$0.38} 
  &  1.55 & 3.10 & 0.59 & n/a\\
  & APHYNITY Param PDE ($a,b,k$) & \textbf{-5.72$\pm$0.25} 
  & \textbf{1.29} & \textbf{1.23} & \textbf{0.39} & (5.9$\pm$4.3)e-1 \\ 
  \cdashline{2-7}\noalign{\vskip 0.2ex}
  & True PDE &  \textbf{-8.86$\pm$0.02} 
  & n/a & n/a & n/a & n/a\\ 
  & APHYNITY True PDE & \textbf{-8.82$\pm$0.15} 
  & n/a & n/a & n/a & (1.8$\pm$0.6)e-5\\
  \bottomrule
    \end{tabular}
}
\end{table}

\subsection{Additional results for the wave equation\label{app:additional_wave}}
We conduct an experiment where each sequence is generated with a different wave celerity. This dataset is challenging because both $c$ and the initial conditions vary across the sequences. For each simulated sequence, an initial condition is sampled as described previously, along with a wave celerity $c$ also sampled uniformly in $[300, 400]$. Finally our initial state is integrated with the same Runge-Kutta scheme. $200$ of such sequences are generated for training while $50$ are kept for testing.
 
For this experiment, we also use a ConvNet encoder to estimate the wave speed $c$ from 5 consecutive reserved states $(w, \frac{\partial w}{\partial t})$. The architecture of the encoder $E$ is the same as in \Tabref{tab:reaction-diffusion-arch} but with 10 input channels. Here also, $k$ is fixed for all sequences and $k=50$. The hyper-parameters used in these experiments are the same than described in \ref{sup:wave-details}.

The results when multiple wave speeds $c$ are in the dataset are consistent with the one present when only one is considered. Indeed, while prediction performances are slightly hindered, the parameter estimation remains consistent for both $c$ and $k$. This extension provides elements attesting for the robustness and adaptability of our method to more complex settings. Finally the purely data-driven Neural-ODE fails to cope with the increasing difficulty.

\begin{table}[H]
    \centering
    \setlength{\tabcolsep}{8pt}
    \caption{Results for the damped wave equation when considering multiple $c$ sampled uniformly in $[300, 400]$ in the dataset, $k$ is shared across all sequences and $k=50$.}
\resizebox{\textwidth}{!}{
    \begin{tabular}{clcccc}
    \toprule
    &Method & $\log$ MSE &
    \%Error $c$ & \%Error $k$ & $\|F_a\|^2$ 
    \\ \midrule
 \parbox{0.9cm}{\centering\tiny Data-driven} &Neural ODE  & 0.056$\pm$0.34& n/a & n/a & n/a \\
  \midrule
\multirowcell{2}{\parbox{0.9cm}{\centering\tiny Incomplete physics}}  &Param PDE ($c$)  &-1.32$\pm$0.27 & 23.9 & n/a & n/a \\ 
  &APHYNITY Param PDE ($c$)  &\textbf{-4.51$\pm$0.38}& 3.2 & n/a & 171 \\ 
  \midrule
\multirowcell{4}{\parbox{0.9cm}{\centering\tiny Complete physics}}  & Param PDE ($c, k$) & -4.25$\pm$0.28 & 3.54 &  1.43& n/a\\
  & APHYNITY Param PDE ($c, k$) &\textbf{-4.84$\pm$0.57} & 2.41 & 0.064 & 3.64 \\ 
  \cdashline{2-6}\noalign{\vskip 0.2ex}
  & True PDE ($c, k$) &\textbf{-4.51$\pm$0.29} & n/a & n/a & n/a \\
  & APHYNITY True PDE ($c, k$) & \textbf{-4.49$\pm$0.22} & n/a &n/a&  0.0005 \\
  \bottomrule
    \end{tabular}
    }
    \label{tab:additional wave}
\end{table}

\subsection{Damped pendulum with varying parameters}
\label{sup:pendulum-encoder}

To extend the experiments conducted in the paper (section \ref{sec:expes}) with fixed parameters ($T_0=6, \alpha=0.2$) and varying initial conditions, we evaluate APHYNITY on a much more challenging dataset where we vary both the parameters ($T_0, \alpha$) and the initial conditions between trajectories.

We simulate 500/50/50 trajectories for the train/valid/test sets integrated with DOPRI5. For each trajectory, the period $T_0$ (resp. the damping coefficient $\alpha)$ are sampled uniformly in the range $[3,10]$ (resp. $[0,0.5]$). 

We train models that take the first 20 steps as input and predict the next 20 steps. To account for the varying ODE parameters between sequences, we use an encoder that estimates the parameters based on the first 20 timesteps. In practice, we use a recurrent encoder composed of 1 layer of 128 GRU units. The output of the encoder is fed as additional input to the data-driven augmentation models and to an MLP with final softplus activations to estimate the physical parameters when necessary ($\omega_0 \in \mathbb{R}_+$ for Param ODE ($\omega_0$), $(\omega_0,\alpha) \in \mathbb{R}_+^2$ for Param ODE ($\omega_0,\alpha$)). 

In this varying ODE context, we also compare to the state-of-the-art univariate time series forecasting method N-Beats \cite{oreshkin2019n}.

Results shown in Table \ref{tab:pendulum-encoder} are consistent with those presented in the paper. Pure data-driven models Neural ODE \cite{chen2018neural} and N-Beats \cite{oreshkin2019n} fail to properly extrapolate the pendulum dynamics. Incomplete physical models (Hamiltonian and ParamODE ($\omega_0$)) are even worse since they do not account for friction. Augmenting them with APHYNITY significantly and consistently improves forecasting results and parameter identification.

\begin{table}[H]
    \centering
\setlength{\tabcolsep}{6.8pt}

    \caption{Forecasting and identification results on the damped pendulum dataset with different parameters for each sequence. log MSEs are computed over 20 predicted time-steps. For each level of incorporated physical knowledge, equivalent best results according to a Student t-test are shown in bold. n/a corresponds to non-applicable cases.  \label{tab:pendulum-encoder}}
 \begin{adjustbox}{max width=\linewidth}    
    \begin{tabular}{clcccc}
    \toprule
    & Method & $\log$ MSE & \%Error $T_0$ & \%Error $\alpha$ & $\|F_a\|^2$  \\ 
    \midrule
\multirowcell{2}{\parbox{0.9cm}{\centering\tiny data-driven}} & Neural ODE \cite{chen2018neural}  & -4.35$\pm$0.9 & n/a & n/a & n/a   \\
 &  N-Beats \cite{oreshkin2019n} & -4.57$\pm$0.5  & n/a & n/a & n/a   \\
  \midrule
\multirowcell{4}{\parbox{0.9cm}{\centering\tiny Incomplete physics}}  & Hamiltonian \cite{greydanus2019hamiltonian}  & -1.31$\pm$0.4  & n/a & n/a  & n/a \\
 & APHYNITY Hamiltonian   & \textbf{-4.72$\pm$0.4}    & n/a & n/a & 5.6$\pm$0.6    \\  
  \cdashline{2-6}\noalign{\vskip 0.2ex}
 & Param ODE ($\omega_0$) & -2.66$\pm$0.9  & 21.5$\pm$19    & n/a & n/a   \\
 & APHYNITY Param ODE ($\omega_0$) &  \textbf{-5.94$\pm$0.7}  & \textbf{5.0$\pm$1.8}  & n/a &  0.49$\pm$0.1  \\  
  \midrule
\multirowcell{4}{\parbox{0.8cm}{\centering\tiny Complete physics}} & Param ODE ($\omega_0, \alpha$)  & \textbf{-5.71$\pm$0.4}  & 4.08$\pm$0.8  & 152$\pm$129   & n/a    \\
 & APHYNITY Param ODE ($\omega_0, \alpha$)  & \textbf{-6.22$\pm$0.7} & \textbf{3.26$\pm$0.6}    &  \textbf{62$\pm$27}   & (5.39$\pm$0.1)e-10  \\ 
  \cdashline{2-6}\noalign{\vskip 0.2ex}
&True ODE  & \textbf{-8.58$\pm$0.1}  & n/a &n/a  & n/a \\ 
 & APHYNITY True ODE   & \textbf{-8.58$\pm$0.1}  & n/a & n/a  & (2.15$\pm$1.6)e-4  \\
  \bottomrule
    \end{tabular}
  \end{adjustbox}  
\end{table}

\end{document}